\documentclass[sigconf]{acmart}
\AtBeginDocument{%
  }

\usepackage{amsmath,amsthm}
\usepackage{algorithm} 
\usepackage{algpseudocode}
\usepackage{graphicx}
\usepackage{subfigure}
\usepackage{multirow}
\usepackage{stfloats}

\usepackage{booktabs}
\usepackage{threeparttable}
\setcopyright{acmlicensed}
\copyrightyear{2018}
\acmYear{2018}
\acmDOI{XXXXXXX.XXXXXXX}
\acmConference[Conference acronym 'XX]{Make sure to enter the correct
  conference title from your rights confirmation email}{June 03--05,
  2018}{Woodstock, NY}

\begin{document}

\title{Geometry-aware Incremental Neural Operator for Long-Horizon PDE prediction}

\settopmatter{authorsperrow=5}

\author{Jiaquan Zhang}
\affiliation{%
  \institution{School of Information and Software Engineering, UESTC}
  \city{Chengdu}
  \country{China}
}

\author{Shuxu Chen}
\affiliation{%
  \institution{Electronics and Information Convergence Engineering, KHU}
  \city{Yongin-si}
  \country{Korea}}

\author{Haifan Meng}
\affiliation{%
  \institution{School of Computer Science and Engineering, UESTC}
  \city{Chengdu}
  \country{China}
}

\author{Yi Lu}
\affiliation{%
 \institution{Department of Mathematical Sciences, UOL}
 \city{Liverpool}
 \country{England}}

\author{Zhihan Lyu}
\affiliation{%
  \institution{School of Computer Science and Technology, XDU}
  \city{Shanxi}
  \country{China}}

\author{Fan Mo}
\affiliation{%
  \institution{School of Mechanical and Electrical Engineering, UESTC}
  \city{Chengdu}
  \country{China}}

\author{Wei Dong}
\affiliation{%
  \institution{College of Computer and Information Engineering, XAUAT}
  \city{Xi'an}
  \country{China}}

\author{Yang Yang}
\affiliation{%
  \institution{School of Computer Science and Engineering, UESTC}
  \city{Chengdu}
  \country{China}
}

\author{Chaoning~Zhang}
\affiliation{%
  \institution{School of Computer Science and Engineering, UESTC}
  \city{Chengdu}
  \country{China}
}

\renewcommand{\shortauthors}{Zhang et al.}

\begin{abstract}
Neural operators have shown strong potential for learning solution operators of partial differential equations (PDEs).
However, long-horizon autoregressive prediction remains challenging: local errors accumulate as spectral inconsistency, phase misalignment, or mean drift. 
Existing methods mainly improve state representations and operator backbones, while leaving the repeatedly applied
latent transition increment weakly structured, allowing spectral errors
and unstable channel couplings to accumulate during rollout.
To address these issues, we propose a geometry-aware incremental neural operator (GeoIncNO) for stable long-horizon PDE prediction. 
GeoIncNO predicts latent increments for residual advancement and
uses lightweight low-rank projectors to regulate channel coupling
within active frequency bands derived from the increment spectral
energy distribution.
To reduce physical-space reconstruction errors, GeoIncNO further introduces a mean--fluctuation decoupled reconstruction mechanism, where stable mean structures and dynamic fluctuations are fused separately, and phase correction is applied only to the zero-mean fluctuation component. 
Extensive experiments on six PDE benchmarks, covering 1D, 2D, and 3D dynamical systems, show that GeoIncNO achieves consistently strong prediction accuracy, improved rollout stability, and better spectral fidelity compared with competitive neural-operator baselines.
\end{abstract}
\begin{CCSXML}
<ccs2012>
   <concept>
       <concept_id>10010147.10010257.10010293.10010294</concept_id>
       <concept_desc>Computing methodologies~Neural networks</concept_desc>
       <concept_significance>500</concept_significance>
       </concept>
 </ccs2012>
\end{CCSXML}

\ccsdesc[500]{Computing methodologies~Neural networks}

\received{20 February 2007}
\received[revised]{12 March 2009}
\received[accepted]{5 June 2009}

\maketitle

\section{Introduction}
Partial differential equations (PDEs) describe the spatiotemporal evolution of many complex physical systems, such as fluid flows, heat transfer and multiscale materials \cite{yip2022step,brunton2024promising,aarts2001neural}. 
Although traditional numerical solvers are accurate, they often require fine-grid discretization and iterative computation, which becomes costly in high-dimensional and long-horizon settings \cite{karniadakis2021physics}.
Neural operators offer a data-driven paradigm for PDE modeling by directly learning mappings between function spaces \cite{azizzadenesheli2024neural,liu2025architectures,dai2026learning,ren2026foundation}. 
This enables fast prediction of future physical fields from initial conditions, making neural operators an important tool for scientific machine learning \cite{DBLP:conf/iclr/LiKALBSA21}.
Despite the significant progress of existing neural operators in one-step and short-term prediction, long-horizon autoregressive prediction remains challenging due to error accumulation \cite{brandstetter2022message,mccabe2023towards,worrall2024spectral}.
Since the prediction at each step is repeatedly fed back as the input for subsequent steps, local errors can be amplified during rollout, leading to degraded spectral consistency and unstable temporal evolution \cite{wu2025differential}.

Existing neural PDE predictors can be broadly grouped according to where temporal evolution is modeled.
One class learns the solution evolution in the physical grid space or in lifted operator features, without explicitly formulating a compact latent-state transition process.
These methods typically improve prediction by introducing Fourier parameterization, multiscale structures, local convolutions, or stronger spatiotemporal mixing modules~\cite{DBLP:journals/tmlr/RahmanRA23,hu2024better}.
Another class first maps physical fields into a latent space and then learns the evolution of latent states, thereby reducing the modeling complexity in the original physical space~\cite{wang2024lno,tiwari2025latent,chen2025neural}.
Although latent-space neural operators provide a compact representation space for PDE dynamics, existing formulations usually model the evolution by mapping one latent state to the next.
The latent change between consecutive states remains entangled with the state representation, so its frequency-domain structure and channel-wise coupling are difficult to characterize or regulate. This entanglement is particularly problematic under autoregressive rollout, where the same transition update is applied repeatedly and small per-step errors accumulate over long horizons.
Therefore, stable long-horizon prediction requires not only an effective latent-state representation, but also a structured increment formulation that directly regularizes the repeated transition process.
Motivated by this limitation, we examine latent-state propagation
from the perspective of the transition increment.
As shown in Section~\ref{sec:motivation}, latent states and latent
increments exhibit different dominant spatial responses.
Specifically, the latent state \(z_t\) retains broad spatial
organization, whereas the transition increment \(\delta z_t\)
exhibits a distinct response associated with the change between
consecutive latent states
(Figure~\ref{fig:latent_pca}).
The covariance spectrum of \(\delta z_t\) also decays faster than
that of \(z_t\), with more of its variation captured by the leading
channel directions
(Figure~\ref{fig:covariance_geometry}).
In addition, its spectral energy is distributed non-uniformly across
spatial frequencies
(Figure~\ref{fig:active_bands}).
Together, these observations motivate modeling the latent increment
explicitly in the frequency--channel domain.
We refer to this modeling perspective as increment geometry.
GeoIncNO accordingly applies structured modulation to the increment
before residual state advancement.

Building on this increment-centered perspective, we propose a
Geometry-aware Incremental Neural Operator (GeoIncNO) for
long-horizon PDE prediction.
GeoIncNO first encodes the input physical field into a latent state
and uses a latent-space backbone to predict a raw transition
increment.
Instead of directly applying this increment for state update, GeoIncNO introduces Active-Band Projection (ABP), which constructs active frequency bands from the spectral structure of the increment and uses lightweight low-rank geometric projectors to modulate band-wise channel coupling.
The resulting geometrically shaped increment is then used for residual latent-state advancement, allowing the frequency-band structure to directly participate in forward propagation and mitigate unstable spectral components and redundant channel coupling that may otherwise accumulate during rollout.
GeoIncNO further introduces Mean--Fluctuation Decoupled Reconstruction (MFDR) to address physical-space reconstruction errors.
Since the candidate physical predictions may contain slow mean shifts, dynamic fluctuations, and phase misalignment, directly fusing complete predicted fields can entangle these errors.
MFDR therefore uses two complementary prediction branches, decomposes their outputs into temporal mean and zero-mean fluctuation fields, and fuses the two components separately.
Phase correction is applied only to the zero-mean fluctuation field, reducing interference with the mean structure and alleviating mean drift during long-horizon prediction.

The main contributions of this work are summarized as follows.
\textbf{\textit{(i)}} We introduce an increment-centered perspective for
neural PDE operators, where the latent transition increment is explicitly
modeled as the object governing autoregressive state propagation.
\textbf{\textit{(ii)}} We propose an active-band increment projection
mechanism that constructs frequency bands from the spectral energy of latent
increments and applies lightweight low-rank projectors to shape band-wise
channel geometry before latent-state advancement.
\textbf{\textit{(iii)}} We design a mean--fluctuation decoupled
reconstruction strategy that separately fuses stable mean structures and
zero-mean fluctuations, while restricting phase correction to the
fluctuation field to reduce mean-field drift.
\textbf{\textit{(iv)}} We validate GeoIncNO on six PDE benchmarks spanning
1D, 2D, and 3D systems, demonstrating improved prediction accuracy, spectral
fidelity, and rollout stability over competitive neural-operator
baselines. Ablation studies further confirm the contribution of each
component.

\section{Related Work}
\label{app:related}
Neural operators~\cite{kovachki2023neural} learn mappings between function spaces, unlike physics-informed neural networks~\cite{raissi2019pinns} that embed governing equations into the loss and retrain per instance; FNO~\cite{DBLP:conf/iclr/LiKALBSA21} and DeepONet~\cite{lu2021deeponet} are representative. Subsequent work either strengthens the operator backbone, via adaptive Fourier mixing~\cite{guibas2022afno}, spectral-basis enrichment~\cite{gupta2021multiwavelet,hu2025wavelet}, attention-based mixing~\cite{hao2023gnot}, and Koopman-linearized operators~\cite{xiong2024koopman}, or compresses PDE dynamics into a latent space~\cite{lusch2018koopman}, as in LSM~\cite{wu2023lsm}, LNO~\cite{wang2024lno}, and PI-Latent-NO~\cite{karumuri2025pilatentno}. Both routes model the state-to-state or latent-state mapping, however, leaving the latent increment that drives temporal evolution entangled with the state representation rather than modeled as a structured object in its own right.
We provide the complete related work in Appendix~\ref{app:RL}.

\section{Motivation}
\label{sec:motivation}

This section presents the core empirical observations that motivate
GeoIncNO's latent-increment modeling and active-band construction.
We first show that latent increments differ from full latent states
in spatial response and covariance geometry.
We then analyze their non-uniform spectral energy distribution,
which motivates energy-adaptive frequency partitioning.
Additional analyses of frequency-dependent band geometry and
mean--fluctuation separation are provided in
Appendix~\ref{sec:moti}.


\begin{figure}[t]
    \centering
    \includegraphics[width=0.9\linewidth]{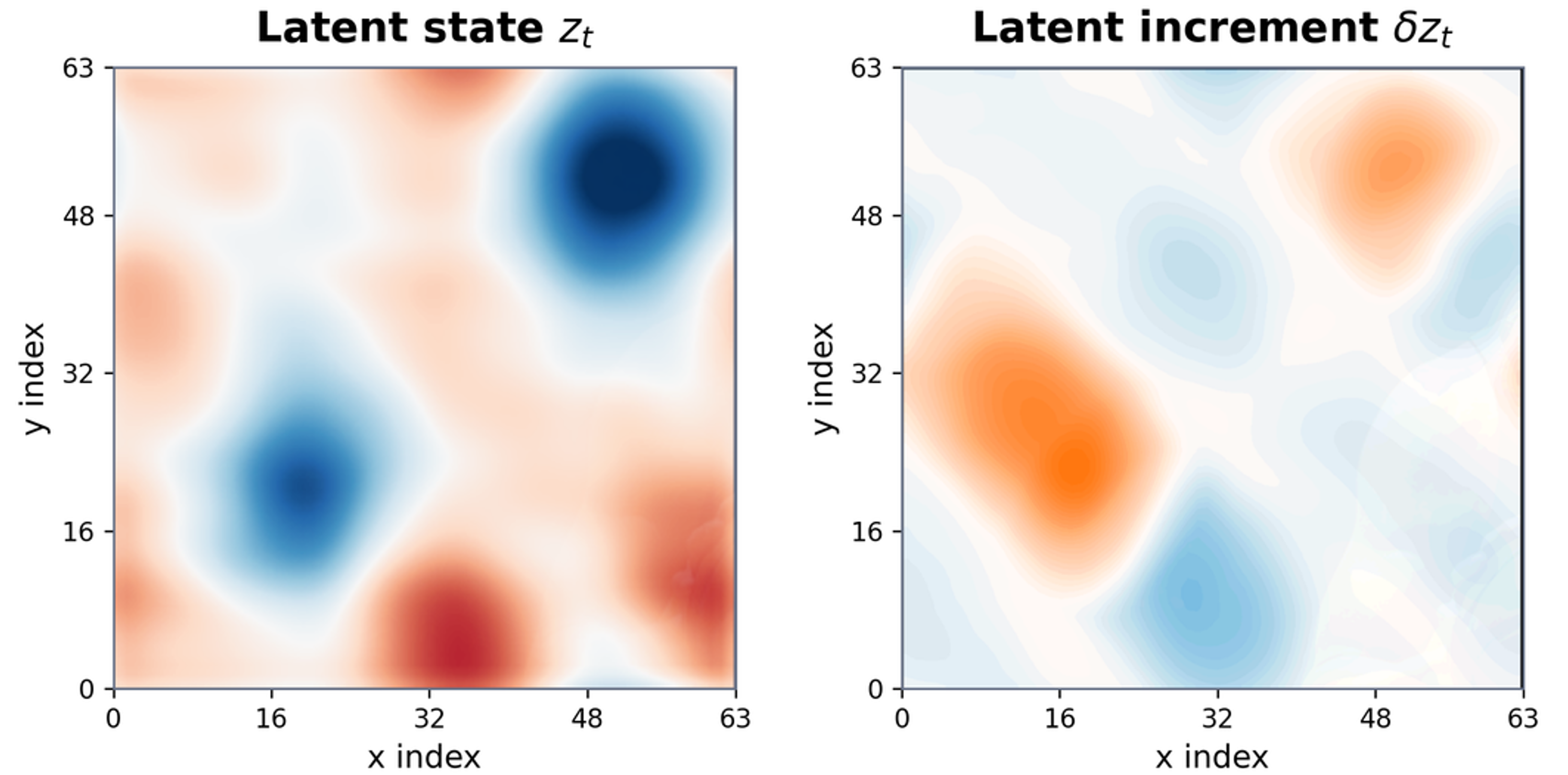}
    \caption{First principal-component projections of \(z_t\) and
    \(\delta z_t\).
    PCA is applied independently along the latent channel dimension,
    so the two panels compare dominant spatial patterns rather than
    cross-panel magnitudes.}
    \label{fig:latent_pca}
\end{figure}

\paragraph{Latent Increment Dynamics}
To examine the roles of latent states and latent increments, we
perform a forward pass using a trained GeoIncNO model on a held-out
2D Navier--Stokes (NS) test sample.
We visualize the latent state \(z_t\) and the corresponding latent
increment \(\delta z_t\) in Figure~\ref{fig:latent_pca}.
Since both are multi-channel latent fields, we apply principal
component analysis (PCA) along the latent channel dimension and
visualize the spatial response of the first principal component.
The PCA projections are computed independently, so the two panels
compare dominant spatial patterns rather than cross-panel magnitudes.

As shown in Figure~\ref{fig:latent_pca}, \(z_t\) exhibits broad
spatial organization, indicating that it preserves the current field
configuration in the latent space.
In contrast, \(\delta z_t\) presents a distinct dominant response
pattern.
This difference suggests that the latent state and latent increment
encode different information rather than repeating the same spatial
structure.
Explicitly modeling the latent increment is therefore more targeted
than imposing structure only on the full latent state, which may
entangle the current field configuration with the transition
information required for subsequent propagation.

\begin{figure}[t]
    \centering
    \includegraphics[width=0.92\linewidth]{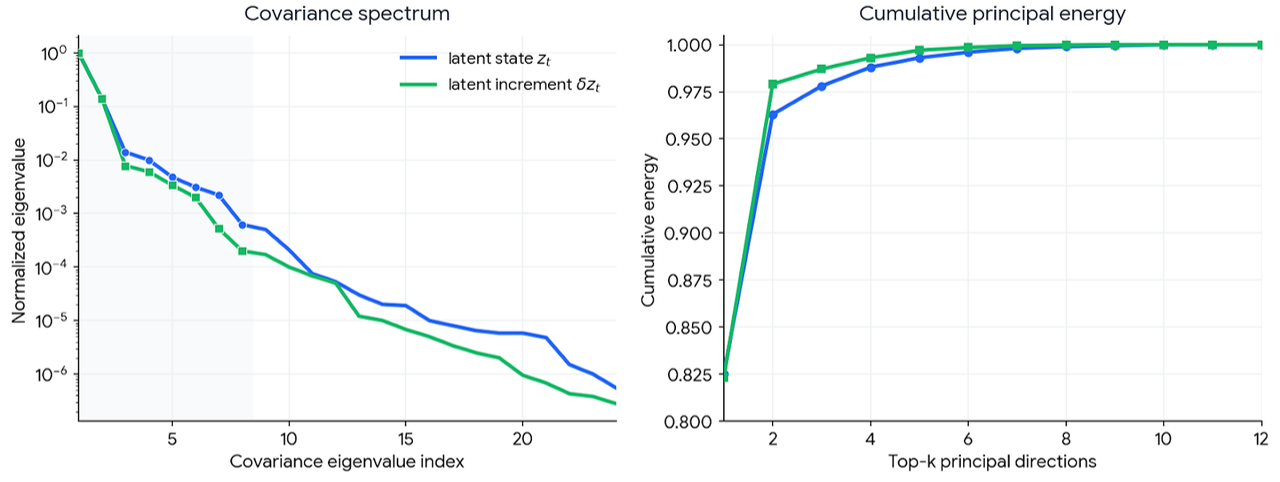}
    \caption{Covariance spectral analysis of \(z_t\) and
    \(\delta z_t\), showing faster eigenvalue decay and higher
    principal-energy concentration for latent increments.}
    \label{fig:covariance_geometry}
\end{figure}

\begin{figure}[t]
    \centering
    \includegraphics[width=0.93\linewidth]{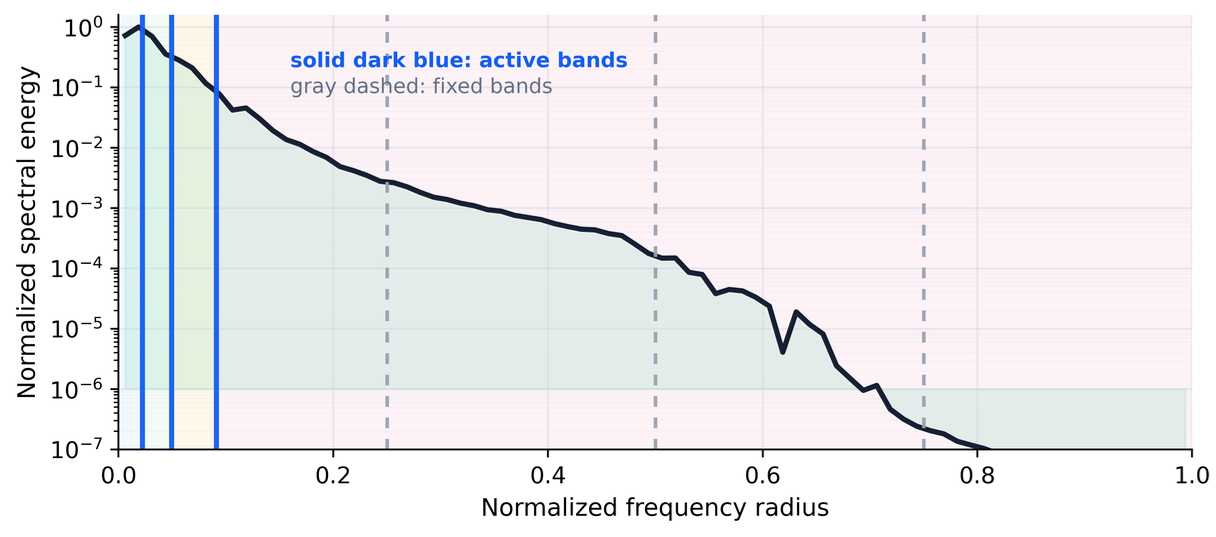}
    \caption{Radial spectral energy distribution of \(\delta z_t\).}
    \label{fig:active_bands}
\end{figure}

\paragraph{Covariance Geometry of Latent Increments}
We analyze the covariance geometry of latent increments by applying
a trained GeoIncNO model to held-out 2D NS test samples and
collecting \(z_t\) and \(\delta z_t\) during inference.
We compute their covariance matrices across the latent channels and
compare the resulting spectra in
Figure~\ref{fig:covariance_geometry}.
Specifically, we perform eigenvalue decomposition and visualize the
normalized eigenvalue spectra and cumulative energy captured by the
top-\(k\) principal directions.
As shown in Figure~\ref{fig:covariance_geometry}, the covariance
eigenvalues of \(\delta z_t\) decay more rapidly than those of
\(z_t\), and its cumulative principal energy rises faster within the
first few directions.
This indicates that the variation of \(\delta z_t\) is concentrated
in fewer principal directions, implying a more compact covariance
structure.
Therefore, its dominant transition directions can be captured by a
small number of principal modes, supporting the increment-centered
geometric design of GeoIncNO.

\paragraph{Non-uniform Increment Spectra}
Next, we analyze the spectral distribution of latent increments.
We use a trained FNO checkpoint on the 2D shallow-water equation (SWE) benchmark to verify that the
observed spectral non-uniformity is not specific to our model.
For each test sample, two consecutive physical contexts and their
coordinates \((x,y)\) are passed through the same FNO lifting layer
and GELU activation to obtain \(z_t\) and \(z_{t+1}\).
The latent increment is then computed as
\(\delta z_t = z_{t+1} - z_t\).
We apply a 2D real FFT to \(\delta z_t\) over the spatial dimensions
and average the spectral power over samples, time steps, and latent
channels.
For each Fourier mode, let \(k_x\) and \(k_y\) denote the frequency
indices along the two spatial axes, and define its radial frequency as $\rho(k_x,k_y)=\sqrt{k_x^2+k_y^2}$. 
As illustrated in Figure~\ref{fig:active_bands}, the black curve
shows the normalized radial spectral energy of \(\delta z_t\),
obtained by grouping Fourier modes according to
\(\rho(k_x,k_y)\).
The energy is non-uniformly distributed and is concentrated in
low-frequency regions and a few effective frequency intervals,
decaying rapidly as the radial frequency increases.
The gray dashed lines denote a fixed uniform frequency partition,
which assigns several bands to low-energy or nearly inactive regions.
In contrast, the blue solid lines show active-band boundaries that
adapt to the observed spectral support of \(\delta z_t\).
This comparison motivates constructing active bands according to
the increment spectral energy, so that geometric modeling focuses on
frequency regions carrying meaningful transition dynamics.

\begin{figure*}
\centering
\includegraphics[width=0.87\linewidth]{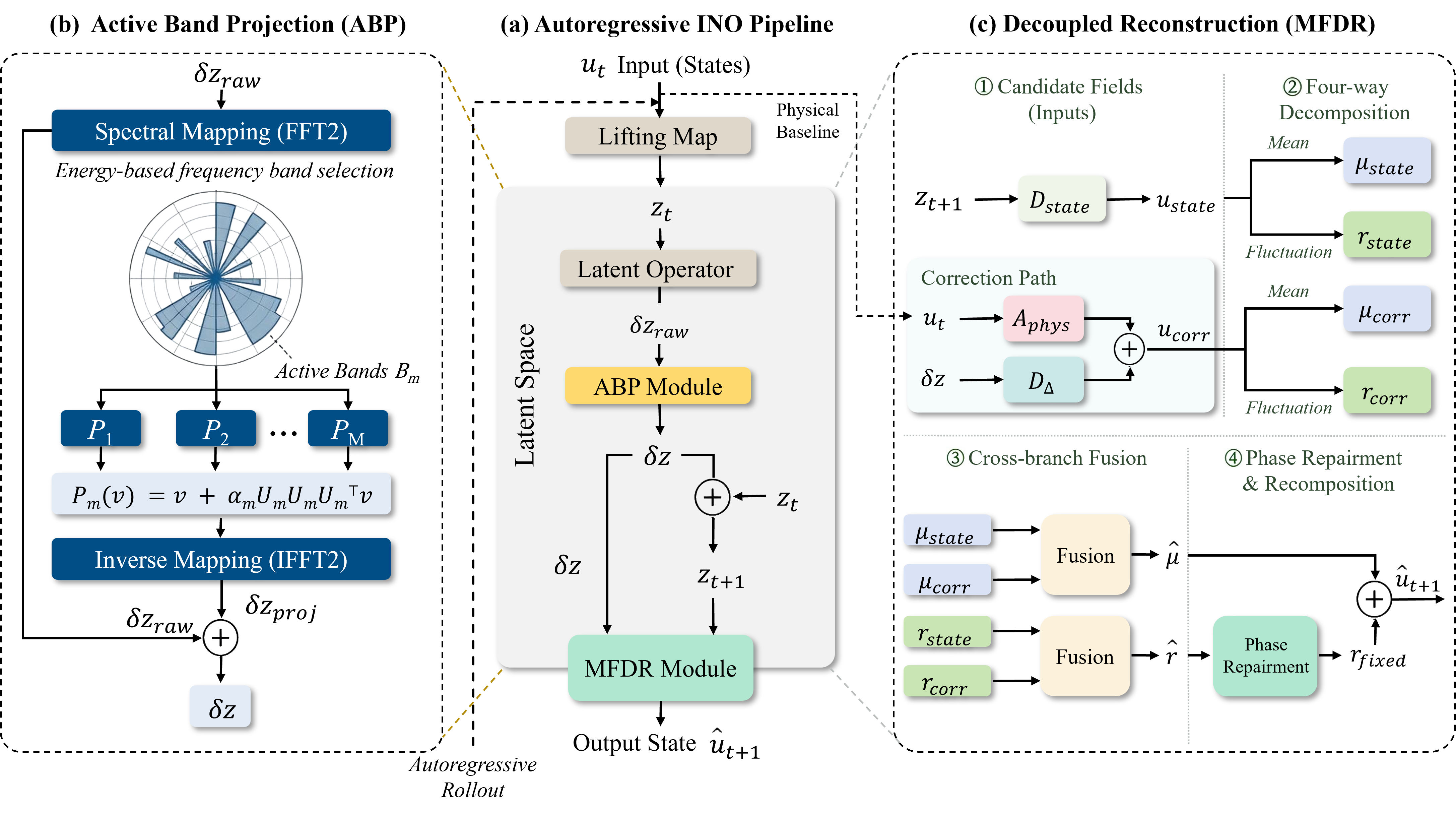}
\caption{
Overview of GeoIncNO.
ABP shapes the spectral--channel geometry of the latent increment
before residual state advancement, while MFDR performs dual-branch
mean--fluctuation reconstruction with fluctuation-only phase correction.
}
\label{fig:overview}
\end{figure*}

\section{Method}
\label{sec:method}
GeoIncNO is built on a guiding principle: it explicitly treats the latent transition increment as the object that governs autoregressive evolution. 
As shown in Figure~\ref{fig:overview}, GeoIncNO follows a geometry-aware incremental pipeline: i) the input physical field is lifted into a latent state, from which a raw latent transition increment is predicted; ii) active spectral bands are constructed from the increment, and low-rank band-wise projection is applied to shape its spectral-channel geometry; iii) the refined increment advances the latent state through residual propagation; iv) two complementary physical predictions are reconstructed from the updated state and the refined increment; and v) temporal mean and zero-mean fluctuation components are decoupled
and fused separately, after which phase correction is applied only to
the fluctuation component.

\subsection{Problem Formulation}

We consider long-horizon prediction for time-dependent PDE systems.
Let \(u_{\mathrm{in}}\) denote the observed physical fields over an
input time window, where
\(u_{\mathrm{in}} \in \mathbb{R}^{B \times T_{\mathrm{in}}
\times N^{\mathrm{in}}_1 \times \cdots
\times N^{\mathrm{in}}_{d_s} \times C_{\mathrm{in}}}\).
Here, \(B\) is the batch size, \(T_{\mathrm{in}}\) is the number of
input time steps, \(d_s\) is the spatial dimension,
\(N^{\mathrm{in}}_1,\ldots,N^{\mathrm{in}}_{d_s}\) are the input
spatial grid sizes, and \(C_{\mathrm{in}}\) is the number of input
physical variables.
The goal is to predict the future physical fields
\(\hat{u} \in \mathbb{R}^{B \times T_{\mathrm{out}}
\times N^{\mathrm{out}}_1 \times \cdots
\times N^{\mathrm{out}}_{d_s} \times C_{\mathrm{out}}}\),
where \(T_{\mathrm{out}}\) is the prediction horizon,
\(N^{\mathrm{out}}_1,\ldots,N^{\mathrm{out}}_{d_s}\) are the output
spatial grid sizes, and \(C_{\mathrm{out}}\) is the number of output
physical variables.
Let \(u_{\mathrm{out}}\) denote the corresponding ground-truth future
fields with the same shape.
The task is to learn a neural operator
\(\mathcal{G}_{\theta}:u_{\mathrm{in}}\mapsto\hat{u}\)
that approximates the PDE solution operator over the target window.

\subsection{Latent State Encoding and Increment Prediction}

\subsubsection{Encoding}

GeoIncNO first encodes the input physical field into a grid-structured
latent representation, where subsequent dynamics are modeled through
latent increments.
Specifically, the input field is mapped as
\(z_{\mathrm{in}}=E_\phi(u_{\mathrm{in}})\), where
\(z_{\mathrm{in}} \in \mathbb{R}^{B \times d \times T_{\mathrm{in}}
\times N^{\mathrm{in}}_1 \times \cdots
\times N^{\mathrm{in}}_{d_s}}\), and \(d\) denotes the latent channel
width.
The encoder preserves the spatiotemporal grid by concatenating the
physical variables with normalized temporal and spatial coordinates,
followed by a latent projection and several dimension-specific spatial
convolutional layers.
The resulting \(z_{\mathrm{in}}\) serves as the latent representation
from which GeoIncNO predicts the transition direction.

\subsubsection{Increment Prediction}

For notational simplicity, we denote \(z_{\mathrm{in}}\) by \(z_t\) when
describing a single latent transition, where \(t\) indexes the
transition itself rather than the intra-window temporal coordinate
\(\tau\).
Given \(z_t\), GeoIncNO extracts transition features as
\(h_t=\mathcal{T}_\theta(z_t)\).
The backbone \(\mathcal{T}_\theta\) consists of mixed latent blocks that
combine spectral-domain modeling, local convolution, pointwise channel
mixing, and a channel MLP, allowing the model to capture both global
modes and local spatiotemporal patterns.
An increment prediction head \(G_\psi\) then predicts the raw latent
increment as \(\delta z_{\mathrm{raw}}=G_\psi(h_t)\).
The raw increment provides an unconstrained estimate of the latent
transition direction and is subsequently refined through the
active-band mechanism before latent-state advancement.

\subsection{Active-Band Increment Projection}

The predicted increment \(\delta z_{\mathrm{raw}}\) encodes the
transition direction of the latent dynamics.
Compared with the full latent state \(z_t\), which also contains static
context, boundary-related information, and sample-specific background
structures, the increment more directly captures the dynamic change
that drives state evolution.
GeoIncNO therefore applies geometric shaping to the increment rather
than to the full state.
Since the freely predicted increment may contain unstructured spectral
components or redundant channel couplings, this shaping is performed
within active spectral bands before residual state advancement.

\subsubsection{Active-Band Construction}

To focus geometric shaping on dynamically relevant frequency regions,
GeoIncNO constructs active bands according to the spectral energy
distribution of the raw latent increment.
The construction is defined on a general \(d_s\)-dimensional spatial
domain and therefore applies to one-, two-, and three-dimensional PDE
systems.
Given
\(\delta z_{\mathrm{raw}}\in
\mathbb{R}^{B \times d \times T_{\mathrm{in}}
\times N^{\mathrm{in}}_1 \times \cdots
\times N^{\mathrm{in}}_{d_s}}\),
where \(N^{\mathrm{in}}_1,\ldots,N^{\mathrm{in}}_{d_s}\) denote the
spatial resolutions of the latent grid, we
compute its spectral representation as
\(\widehat{\delta z}_{\mathrm{raw}}
=\mathcal{F}_{\mathbf{x}}(\delta z_{\mathrm{raw}})\), where
\(\mathbf{x}=(x_1,\ldots,x_{d_s})\).
For periodic domains, \(\mathcal{F}_{\mathbf{x}}\) is implemented as a
\(d_s\)-dimensional fast Fourier transform.
For non-periodic discretizations, it can be replaced with a
boundary-compatible orthogonal spectral transform.
Let
\(\mathbf{k}=(k_1,\ldots,k_{d_s})\in\mathbb{Z}^{d_s}\)
denote a spatial frequency index.
The average spectral energy of the raw latent increment at
\(\mathbf{k}\) is defined as
\begin{align}
E(\mathbf{k})
&=
\frac{1}{B T_{\mathrm{in}} d}
\sum_{b,t,c}
\left|
\widehat{\delta z}_{\mathrm{raw}}
(b,c,t,\mathbf{k})
\right|^2.
\end{align}
The active-band partition is constructed within the effective spectral
support
\(\Omega_{\mathrm{eff}}\subseteq\mathbb{Z}^{d_s}\),
preventing inactive frequency locations from affecting the band
boundaries.
To obtain a dimension-independent radial partition, we define
\begin{align}
\rho(\mathbf{k})
&=
\left(
\sum_{j=1}^{d_s}
\left(
\frac{|k_j|}{k_j^{\max}+\epsilon}
\right)^2
\right)^{1/2},
\end{align}
where \(k_j^{\max}\) denotes the maximum retained frequency index along
the \(j\)-th spatial direction and \(\epsilon\) is a numerical
stability constant.

GeoIncNO determines the active-band boundaries from the cumulative
increment spectral energy.
The cumulative energy function over \(\Omega_{\mathrm{eff}}\) is
defined as
\begin{align}
C(r)
&=
\frac{
\sum_{\substack{
\mathbf{k}\in\Omega_{\mathrm{eff}}\\
\rho(\mathbf{k})\le r
}}
E(\mathbf{k})
}{
\sum_{\mathbf{k}\in\Omega_{\mathrm{eff}}}
E(\mathbf{k})
+\epsilon
}.
\end{align}
Given \(M\) active bands, the boundary of the \(m\)-th band is
determined by the energy quantile
\begin{align}
e_m
&=
\inf
\left\{
r \mid C(r)\ge\frac{m}{M}
\right\},
\qquad
m=1,\ldots,M.
\end{align}
The active band is $\mathcal{B}_m=
\left\{
\mathbf{k}\in\Omega_{\mathrm{eff}}
\mid
e_{m-1}\le\rho(\mathbf{k})<e_m
\right\}$.
The last band includes the right endpoint so that all frequency
locations in \(\Omega_{\mathrm{eff}}\) are covered.
For 1D PDEs, the active bands are intervals on the frequency axis.
For 2D PDEs, they correspond to radial annuli in the frequency plane,
and for 3D PDEs, they correspond to radial shells in the frequency
volume.
Thus, ABP constructs active bands in a unified \(d_s\)-dimensional
frequency space rather than relying on a specifically two-dimensional
formulation.

To maintain stable band semantics, GeoIncNO calibrates the active-band
boundaries during the early training stage.
Let \(E_b(\mathbf{k})\) denote the increment spectral energy estimated
from the \(b\)-th mini-batch.
The training-set estimate is
\(\bar{E}(\mathbf{k})
=N_{\mathrm{batch}}^{-1}
\sum_{b=1}^{N_{\mathrm{batch}}}E_b(\mathbf{k})\).
The cumulative energy function is then computed by replacing
\(E(\mathbf{k})\) with \(\bar{E}(\mathbf{k})\), and the boundaries
\(\{e_m\}_{m=1}^{M}\) are obtained using the same energy-quantile rule.
Once calibrated, the boundaries remain fixed during subsequent
training and inference.
If the effective spectral energy is insufficient or too few frequency
locations are available, GeoIncNO falls back to a uniform radial
partition.

\subsubsection{Low-Rank Band Projection}

After constructing the active bands, GeoIncNO applies a lightweight
low-rank projection within each band to reshape the channel coupling of
the spectral increment.
Here, projection refers to residual low-rank geometric shaping rather
than a strict orthogonal projection.
For a frequency location
\(\mathbf{k}\in\mathcal{B}_m\), let
\(v=a+ib\in\mathbb{C}^d\), where \(a,b\in\mathbb{R}^d\), denote the
complex channel vector of the spectral increment.

For each band, we introduce a learnable low-rank channel basis
\(U_m\in\mathbb{R}^{d\times r}\), where \(r\ll d\), and define
\begin{align}
P_m^{\mathbb{R}}(x)
&=
x+\alpha_m U_mU_m^\top x,
\end{align}
where \(\alpha_m\) is a learnable projection strength and
\(x\in\mathbb{R}^d\).
The low-rank basis restricts band-wise channel modulation to a few
learnable directions, avoiding unconstrained full-channel mixing.
The same real-valued projector is applied to the real and imaginary
parts as
\(P_m(v)=P_m^{\mathbb{R}}(a)+iP_m^{\mathbb{R}}(b)\).
This shared projector preserves the complex spectral representation
while restricting channel transformation to a low-rank subspace.

The projector \(P_m\) is applied independently to the channel vector
at each batch index, time step, and frequency location in
\(\mathcal{B}_m\).
The projected bands are placed back into their corresponding spectral
locations to form
\(\widehat{\delta z}_{\mathrm{proj}}\), and the projected increment is
obtained as
\(\delta z_{\mathrm{proj}}
=\mathcal{F}_{\mathbf{x}}^{-1}
(\widehat{\delta z}_{\mathrm{proj}})\).
We denote the complete active-band projection operator by
\(P_{\mathrm{AB}}\), such that
\(\delta z_{\mathrm{proj}}
=P_{\mathrm{AB}}(\delta z_{\mathrm{raw}})\).
The projected increment is used as a residual geometric correction:
\begin{align}
\delta z
&=
\delta z_{\mathrm{raw}}
+
\eta
\left(
P_{\mathrm{AB}}(\delta z_{\mathrm{raw}})
-
\delta z_{\mathrm{raw}}
\right), \\
z_{t+1}
&=
z_t+\delta z.
\end{align}
Here, \(\eta\) controls the correction strength.
Since this correction is applied before residual state advancement,
the active-band geometry directly participates in the latent
transition rather than serving only as an auxiliary regularizer.
The frequency-dependent latent-channel geometry underlying this
band-wise design is empirically analyzed in
Appendix~\ref{sec:app_band_geometry}.

\subsection{Dual-Branch Physical-Space Reconstruction}

After the latent update, GeoIncNO obtains the refined increment
\(\delta z\) and the updated latent state \(z_{t+1}\).
Since \(\delta z\) captures the transition-induced correction while
\(z_{t+1}\) represents the evolved latent state, GeoIncNO reconstructs
the physical output through two complementary branches.
The input-anchored correction branch preserves stable input structures
with increment-induced correction, whereas the state-decoding branch
predicts the output from the updated latent state.

\subsubsection{Input-Anchored Correction Branch}

The input-anchored correction branch maps the input physical field to
the target output window and grid using
\(u_{\mathrm{base}}=A_{\mathrm{phys}}(u_{\mathrm{in}})\), where
\(A_{\mathrm{phys}}=R_{\mathrm{sp}}\circ M_T\circ M_C\).
Here,
\(M_C:\mathbb{R}^{C_{\mathrm{in}}}\rightarrow
\mathbb{R}^{C_{\mathrm{out}}}\)
maps physical variables,
\(M_T:\mathbb{R}^{T_{\mathrm{in}}}\rightarrow
\mathbb{R}^{T_{\mathrm{out}}}\)
maps the temporal window, and \(R_{\mathrm{sp}}\) resamples the spatial grid from
\((N^{\mathrm{in}}_1,\ldots,N^{\mathrm{in}}_{d_s})\) to
\((N^{\mathrm{out}}_1,\ldots,N^{\mathrm{out}}_{d_s})\).
The refined increment is decoded into a physical correction as
\(\delta u=D_\Delta(\delta z)\), where \(D_\Delta\) consists of a
pointwise spatiotemporal convolutional head and a temporal linear
projection.
The correction branch is then given by
\begin{align}
u_{\mathrm{corr}}
&=
\alpha u_{\mathrm{base}}+\delta u,
\end{align}
where \(\alpha\) is a learnable anchor mixing coefficient.

\subsubsection{State-Decoding Branch}

The state-decoding branch directly predicts
\(u_{\mathrm{state}}=D_{\mathrm{state}}(z_{t+1})\), where
\(D_{\mathrm{state}}\) is independent of \(D_\Delta\).
While \(D_\Delta(\delta z)\) predicts a correction relative to the
input-anchored base field, \(D_{\mathrm{state}}(z_{t+1})\) predicts the
complete physical field from the evolved latent representation.
Thus, GeoIncNO obtains two complementary predictions,
\(u_{\mathrm{corr}}\) and \(u_{\mathrm{state}}\).
Both decoders map to the physical output space, i.e.,
\(D_\Delta(\delta z),\, D_{\mathrm{state}}(z_{t+1})\in
\mathbb{R}^{B\times T_{\mathrm{out}}\times N_1^{\mathrm{out}}
\times\cdots\times N_{d_s}^{\mathrm{out}}\times C_{\mathrm{out}}}\).

\subsection{Mean--Fluctuation Decoupled Reconstruction}

Directly fusing \(u_{\mathrm{corr}}\) and \(u_{\mathrm{state}}\)
mixes mean structure, fluctuation dynamics, and phase errors in the
same prediction space.
Motivated by the distinct temporal roles of the mean and zero-mean
fluctuation components illustrated in
Appendix~\ref{sec:app_mean_fluctuation}, GeoIncNO decomposes each
candidate prediction into these two components, fuses them separately,
and applies phase correction only to the fluctuation component.

\subsubsection{Mean--Fluctuation Decomposition and Fusion}

For an output sequence 
\(u \in
\mathbb{R}^{B \times T_{\mathrm{out}}
\times N^{\mathrm{out}}_1 \times \cdots
\times N^{\mathrm{out}}_{d_s}
\times C_{\mathrm{out}}}\),
we define its temporal mean and zero-mean fluctuation as
\begin{align}
\mu(u)
&=
\frac{1}{T_{\mathrm{out}}}
\sum_{\tau=1}^{T_{\mathrm{out}}}u_\tau, \\
r(u)
&=
u-\operatorname{repeat}_t(\mu(u)).
\end{align}
Here,
\(\mu(u)
\in
\mathbb{R}^{B \times 1
\times N^{\mathrm{out}}_1 \times \cdots
\times N^{\mathrm{out}}_{d_s}
\times C_{\mathrm{out}}}\), and
\(\operatorname{repeat}_t(\mu(u))\) copies the temporal mean along the
output time dimension.
By construction, \(\operatorname{mean}_t(r(u))=0\), and
\(u=\operatorname{repeat}_t(\mu(u))+r(u)\).

Applying this decomposition to the two candidate predictions gives
\begin{align}
\mu_{\mathrm{corr}}
&=
\mu(u_{\mathrm{corr}}), &
r_{\mathrm{corr}}
&=
u_{\mathrm{corr}}
-\operatorname{repeat}_t(\mu_{\mathrm{corr}}), \\
\mu_{\mathrm{state}}
&=
\mu(u_{\mathrm{state}}), &
r_{\mathrm{state}}
&=
u_{\mathrm{state}}
-\operatorname{repeat}_t(\mu_{\mathrm{state}}).
\end{align}
GeoIncNO uses separate gates for mean-field and fluctuation-field
fusion:
\begin{align}
\hat{\mu}
&=
\Gamma_\mu(z_{t+1})\odot\mu_{\mathrm{corr}}
+
\left[1-\Gamma_\mu(z_{t+1})\right]\odot\mu_{\mathrm{state}}, \\
\hat{r}
&=
\operatorname{center}_t
\left(
\Gamma_r(z_{t+1})\odot r_{\mathrm{corr}}
+
\left[1-\Gamma_r(z_{t+1})\right]\odot r_{\mathrm{state}}
\right).
\end{align}
Here, \(\Gamma_\mu\) and \(\Gamma_r\) are generated from the updated
latent state, and \(\operatorname{center}_t(\cdot)\) removes the
temporal mean to keep \(\hat{r}\) zero-mean.
The separate gates allow the mean structure and fluctuation dynamics
to use different branch preferences.

\subsubsection{Fluctuation-Only Phase Correction}

Phase shift is meaningful for non-zero temporal fluctuations rather
than for the temporal mean.
GeoIncNO therefore applies phase correction only to the fused
fluctuation \(\hat{r}\).
Let \(\mathcal{P}_{\mathrm{phase}}\) denote a lightweight temporal
advancement operator.
The phase residual and gated correction are
\begin{align}
\Delta r_{\mathrm{phase}}
&=
\mathcal{P}_{\mathrm{phase}}(\hat{r})-\hat{r}, \\
r_{\mathrm{phase}}
&=
\operatorname{center}_t
\left(
\Gamma_{\mathrm{phase}}(z_{t+1})
\odot\Delta r_{\mathrm{phase}}
\right).
\end{align}
The centering operation keeps the phase correction zero-mean and
prevents direct perturbation of the temporal mean.

The final prediction is reconstructed as
\begin{align}
\hat{u}
&=
\operatorname{repeat}_t(\hat{\mu})
+\hat{r}
+r_{\mathrm{phase}}.
\end{align}
Thus, the reconstruction separates stable mean structures,
non-stationary fluctuations, and phase correction within the
fluctuation subspace.

\subsection{Training Objective}
\label{subsec:training-objective}

GeoIncNO is trained to jointly promote accurate physical-field
prediction, structured active-band increment geometry, and limited
deviation between the raw and geometrically shaped increments.
The primary prediction loss is
\(\mathcal{L}_{\mathrm{pred}}
=\operatorname{MSE}(\hat{u},u_{\mathrm{out}})\).

The frequency--channel structure of the latent increment is regulated
through an active-band geometry loss.
For the \(m\)-th active band \(\mathcal{B}_m\), we flatten the batch,
temporal, and frequency dimensions of
\(\widehat{\delta z}_{\mathrm{raw}}^{(m)}\) and concatenate its real and
imaginary components, yielding
\(X_m\in\mathbb{R}^{N_m\times d}\).
Its centered covariance and normalized correlation matrices are
\begin{align}
C_m
&=
\frac{1}{N_m-1}
\bar{X}_m^\top\bar{X}_m, \\
R_m
&=
D_m^{-1/2}C_mD_m^{-1/2}, \\
D_m
&=
\operatorname{diag}(C_m)+\epsilon I_d,
\end{align}
where \(\bar{X}_m\) is centered along the sample dimension.
Redundant channel coupling is penalized by
\begin{align}
\mathcal{L}_{\mathrm{corr}}
&=
\frac{1}{M}
\sum_{m=1}^{M}
\frac{
\left\|
\operatorname{offdiag}(R_m)
\right\|_F^2
}{
d(d-1)
}.
\end{align}
This term suppresses strong off-diagonal correlations and encourages
different channels to represent complementary transition directions.
A variance support term is further introduced to prevent feature
collapse:
\begin{align}
\mathcal{L}_{\mathrm{var}}
&=
\frac{1}{M}
\sum_{m=1}^{M}
\frac{1}{d}
\sum_{c=1}^{d}
\left[
\max
\left(
0,
\gamma-\sqrt{C_m(c,c)+\epsilon}
\right)
\right]^2.
\end{align}
This term maintains sufficient channel variation within each active
band.
The complete geometry loss is
\(\mathcal{L}_{\mathrm{geom}}
=\mathcal{L}_{\mathrm{corr}}+\mathcal{L}_{\mathrm{var}}\).
Projection consistency is enforced by keeping the refined increment
close to the transition direction predicted by the latent backbone:
\begin{align}
\mathcal{L}_{\mathrm{proj}}
&=
\frac{
\left\|
\delta z-\delta z_{\mathrm{raw}}
\right\|_2^2
}{
\left\|
\delta z_{\mathrm{raw}}
\right\|_2^2+\epsilon
}.
\end{align}

The final objective is
\begin{align}
\mathcal{L}
&=
\mathcal{L}_{\mathrm{pred}}
+
\lambda_{\mathrm{geom}}(s)\mathcal{L}_{\mathrm{geom}}
+
\lambda_{\mathrm{proj}}(s)\mathcal{L}_{\mathrm{proj}},
\end{align}
where \(s\) denotes the training progress.
A warm-up schedule gradually increases
\(\lambda_{\mathrm{geom}}(s)\) and
\(\lambda_{\mathrm{proj}}(s)\), allowing GeoIncNO to first learn a
reliable predictive transition and then progressively enforce
increment geometry and projection consistency.

The complete inference procedure is summarized in
Algorithm~\ref{alg:GeoIncNO}.
Appendix~\ref{sec:theory} provides a conditional interpretation of GeoIncNO.
Under bounded latent reconstruction error and Lipschitz decoding,
controlling the latent increment error also controls the physical
prediction error; under spectral-support and band-wise low-rank
assumptions, ABP admits a geometric interpretation.

\section{Experiments}
\label{sec:experiments}

\subsection{Experimental Setup}
\label{subsec:experimental-setup}

\subsubsection{Baseline Models}
\label{subsec:baselines}
We select seven baselines to cover neural-operator paradigms for PDE prediction.
DeepONet \cite{lu2021deeponet} is included as a classical operator-learning baseline based on the branch--trunk formulation. 
FNO \cite{DBLP:conf/iclr/LiKALBSA21}, UNO \cite{DBLP:journals/tmlr/RahmanRA23}, and WNO \cite{tripura2023wavelet} represent transform-domain and multi-scale neural operators, covering Fourier-based global spectral modeling, U-shaped multi-resolution operator learning, and wavelet-based localized multi-scale modeling. 
We also include PINO \cite{DBLP:journals/corr/abs-2111-03794} as a physics-informed neural operator that incorporates PDE residual constraints during training.
We further include two recent latent-space neural operators, LNO \cite{wang2024lno} and LaMO \cite{tiwari2025latent}, as the most direct baselines to GeoIncNO. 
While these methods also reduce PDE prediction to latent-space operator learning, they primarily focus on learning compact latent representations or efficient latent dynamics. 
In contrast, GeoIncNO explicitly formulates the latent transition as an increment and shapes its active spectral components before state advancement. 

\subsubsection{PDE Benchmarks}
\label{subsec:pde-benchmarks}

We evaluate GeoIncNO on six PDE benchmarks generated under a
PDEBench-style protocol: Burgers, Kuramoto--Sivashinsky (KS),
Navier--Stokes (NS), shallow-water (SW), 3D compressible Euler (CE), and Maxwell.
These benchmarks span one-, two-, and three-dimensional systems and
cover convection--diffusion, spatiotemporally chaotic, vortical,
wave-propagation, compressible-flow, and electromagnetic dynamics.
Table~\ref{tab:pde_benchmarks} summarizes their spatial domains,
grid resolutions, temporal ranges, numbers of snapshots, and numbers
of physical state variables.
In the Time Range column, the notation
\([t_{\min},t_{\max}]/N_t\) denotes the simulated time interval and
the number of temporal snapshots.
The corresponding governing equations and variable definitions are
provided in Appendix~\ref{sec:pde_details}.

\subsubsection{Evaluation Metrics}

We report metrics averaged over the test set to evaluate prediction accuracy and rollout stability.
Field-level accuracy is measured by the relative \(L_2\) error
\(\mathrm{Rel}\text{-}L_2=\|u-\hat{u}\|_2/\|u\|_2\) and MSE, where Rel-\(L_2\) evaluates scale-normalized errors and MSE measures absolute deviations.
We further report the relative Sobolev error
\(\mathrm{Rel}\text{-}H^1=\|u-\hat{u}\|_{H^1}/\|u\|_{H^1}\),
which accounts for both field and gradient discrepancies.
For spectral accuracy, we use the weighted log ratio (WLR) to measure discrepancies between predicted and reference spectral energies.
For autoregressive rollouts, we report the final-step RMSE (F-RMSE) and cumulative RMSE (C-RMSE), which measure terminal and accumulated prediction errors, respectively.
Boundary RMSE (B-RMSE) is used when boundary-region accuracy is evaluated.
We also report the relative temporal-mean drift (Mean Drift) to quantify long-term bias in the rollout trajectory.
The full definitions of WLR and Mean Drift are provided in Appendix~\ref{subsec:evaluation-metrics}.

\subsubsection{Implementation Details}
\label{subsec:implementation-details}
All experiments are implemented in PyTorch on a single NVIDIA RTX 4090 GPU (24\,GB).
For each benchmark, we use 1,000 trajectories for training and 200 independently generated trajectories for testing.
All non-physics-informed baselines are trained with the MSE prediction loss only, whereas PINO additionally uses the PDE residual loss following its original formulation, and GeoIncNO is optimized with the objective in Section~\ref{subsec:training-objective}, which augments the prediction loss with active-band increment geometry regularization and projection consistency.
All models are trained for 100 epochs using Adam~\cite{DBLP:journals/corr/KingmaB14} with an initial learning rate of \(1\times10^{-4}\) and cosine annealing, together with the pushforward strategy at a rollout length of \(T=5\) for long-horizon stability.
For GeoIncNO, we use \(M=4\) active frequency bands, a low-rank projector dimension of \(r=8\), and an ABP correction strength of \(\eta=0.05\) unless otherwise specified.
Full training and dimension-specific operator details are provided in Appendix~\ref{app:impl}.

\subsection{Overall Prediction Performance}

Table~\ref{tab:main_results} compares GeoIncNO with competitive
neural-operator baselines across six PDE benchmarks in field accuracy,
local-structure preservation, spectral fidelity, and cumulative rollout
error.
The complete results, including MSE, F-RMSE, parameter counts, and
standard deviations, are provided in
Appendix Table~\ref{tab:full_results}.
GeoIncNO achieves the best results on all four primary metrics across
all benchmarks.
Compared with the FNO backbone, GeoIncNO substantially reduces
Rel-\(L_2\), from \(3.22\text{E-}2\) to
\(1.64\text{E-}2\) on 1D KS, from \(2.65\text{E-}1\) to
\(7.54\text{E-}3\) on 2D NS, and from \(2.75\text{E-}2\) to
\(5.43\text{E-}3\) on 3D CE.
The gains are particularly evident on the chaotic 1D KS benchmark,
where several baselines exhibit severe error accumulation, whereas
GeoIncNO achieves a C-RMSE of \(2.02\text{E-}1\) and a
Rel-\(H^1\) of \(2.12\text{E-}2\).
GeoIncNO also consistently outperforms the latent-space baselines LNO
and LaMO.
On 2D NS, for example, it reduces LaMO's Rel-\(L_2\), Rel-\(H^1\),
WLR, and C-RMSE from \(1.37\text{E-}2\),
\(2.42\text{E-}2\), \(2.05\text{E-}2\), and
\(1.43\text{E-}1\) to \(7.54\text{E-}3\),
\(1.33\text{E-}2\), \(1.13\text{E-}2\), and
\(7.87\text{E-}2\), respectively.
The simultaneous improvements in Rel-\(H^1\), WLR, and C-RMSE indicate
that GeoIncNO better preserves local and spectral structures throughout
autoregressive rollout, supporting the effectiveness of explicitly
structuring the latent transition increment.

\begin{table}[t!]
\centering
\caption{
Main prediction results across six PDE benchmarks}
\label{tab:main_results}
\scriptsize
\setlength{\tabcolsep}{2.3pt}
\renewcommand{\arraystretch}{0.92}
\resizebox{\linewidth}{!}{
\begin{tabular}{llcccc}
\toprule
PDE & Method
& Rel-\(L_2\)$\downarrow$
& Rel-\(H^1\)$\downarrow$
& WLR$\downarrow$
& C-RMSE$\downarrow$ \\
\midrule

\multirow{8}{*}{\textbf{1D Burgers}}
& DeepONet & $8.64\text{E-}2$ & $1.68\text{E-}1$ & $1.82\text{E-}1$ & $7.84\text{E-}5$ \\
& FNO      & $1.29\text{E-}2$ & $4.68\text{E-}2$ & $2.68\text{E-}2$ & $1.18\text{E-}5$ \\
& UNO      & $9.74\text{E-}2$ & $1.76\text{E-}1$ & $2.16\text{E-}1$ & $8.96\text{E-}5$ \\
& WNO      & $9.86\text{E-}3$ & $6.73\text{E-}2$ & $1.86\text{E-}2$ & $6.98\text{E-}6$ \\
& PINO     & $2.59\text{E-}2$ & $5.72\text{E-}2$ & $5.76\text{E-}2$ & $2.68\text{E-}5$ \\
& LNO      & $7.68\text{E-}2$ & $1.44\text{E-}1$ & $1.52\text{E-}1$ & $6.71\text{E-}5$ \\
& LaMO     & $8.72\text{E-}3$ & $7.94\text{E-}2$ & $1.43\text{E-}2$ & $5.94\text{E-}6$ \\
& GeoIncNO & $\mathbf{4.80\text{E-}3}$ & $\mathbf{2.57\text{E-}2}$ &
$\mathbf{7.87\text{E-}3}$ & $\mathbf{3.27\text{E-}6}$ \\

\cmidrule(lr){1-6}

\multirow{8}{*}{\textbf{1D KS}}
& DeepONet & $2.98\text{E-}2$ & $3.86\text{E-}2$ & $4.08\text{E-}2$ & $3.67\text{E-}1$ \\
& FNO      & $3.22\text{E-}2$ & $5.41\text{E-}2$ & $5.04\text{E-}2$ & $4.13\text{E-}1$ \\
& UNO      & $3.58\text{E-}2$ & $5.96\text{E-}2$ & $5.51\text{E-}2$ & $4.49\text{E-}1$ \\
& WNO      & $6.22\text{E-}1$ & $8.00\text{E-}1$ & $1.35\text{E+}0$ & $7.58\text{E+}1$ \\
& PINO     & $9.29\text{E-}1$ & $9.70\text{E-}1$ & $1.15\text{E+}0$ & $1.22\text{E+}2$ \\
& LNO      & $6.38\text{E-}1$ & $6.95\text{E-}1$ & $1.18\text{E+}0$ & $7.87\text{E+}1$ \\
& LaMO     & $1.88\text{E-}1$ & $3.40\text{E-}1$ & $5.03\text{E-}1$ & $2.34\text{E+}1$ \\
& GeoIncNO & $\mathbf{1.64\text{E-}2}$ & $\mathbf{2.12\text{E-}2}$ &
$\mathbf{2.24\text{E-}2}$ & $\mathbf{2.02\text{E-}1}$ \\

\cmidrule(lr){1-6}

\multirow{8}{*}{\textbf{2D NS}}
& DeepONet & $2.41\text{E-}1$ & $2.46\text{E-}1$ & $3.87\text{E-}1$ & $3.26\text{E+}0$ \\
& FNO      & $2.65\text{E-}1$ & $6.19\text{E-}1$ & $5.01\text{E-}1$ & $3.62\text{E+}0$ \\
& UNO      & $2.87\text{E-}1$ & $6.78\text{E-}1$ & $4.77\text{E-}1$ & $3.91\text{E+}0$ \\
& WNO      & $2.11\text{E-}2$ & $3.29\text{E-}2$ & $3.46\text{E-}2$ & $2.68\text{E-}1$ \\
& PINO     & $3.04\text{E-}2$ & $4.87\text{E-}2$ & $4.81\text{E-}2$ & $3.96\text{E-}1$ \\
& LNO      & $4.13\text{E-}2$ & $5.32\text{E-}2$ & $6.81\text{E-}2$ & $5.13\text{E-}1$ \\
& LaMO     & $1.37\text{E-}2$ & $2.42\text{E-}2$ & $2.05\text{E-}2$ & $1.43\text{E-}1$ \\
& GeoIncNO & $\mathbf{7.54\text{E-}3}$ & $\mathbf{1.33\text{E-}2}$ &
$\mathbf{1.13\text{E-}2}$ & $\mathbf{7.87\text{E-}2}$ \\

\cmidrule(lr){1-6}

\multirow{8}{*}{\textbf{2D SW}}
& DeepONet & $1.35\text{E-}1$ & $2.39\text{E-}1$ & $2.80\text{E-}1$ & $1.46\text{E+}0$ \\
& FNO      & $7.43\text{E-}2$ & $1.49\text{E-}1$ & $1.32\text{E-}1$ & $6.91\text{E-}1$ \\
& UNO      & $8.06\text{E-}2$ & $1.61\text{E-}1$ & $1.46\text{E-}1$ & $7.50\text{E-}1$ \\
& WNO      & $1.73\text{E-}1$ & $4.61\text{E-}1$ & $5.53\text{E-}1$ & $1.62\text{E+}0$ \\
& PINO     & $7.20\text{E-}2$ & $1.44\text{E-}1$ & $1.25\text{E-}1$ & $6.69\text{E-}1$ \\
& LNO      & $9.94\text{E-}2$ & $1.78\text{E-}1$ & $9.94\text{E-}2$ & $9.23\text{E-}1$ \\
& LaMO     & $1.14\text{E-}1$ & $9.48\text{E-}1$ & $3.07\text{E-}1$ & $1.06\text{E+}0$ \\
& GeoIncNO & $\mathbf{3.96\text{E-}2}$ & $\mathbf{7.92\text{E-}2}$ &
$\mathbf{5.47\text{E-}2}$ & $\mathbf{3.68\text{E-}1}$ \\

\cmidrule(lr){1-6}

\multirow{8}{*}{\textbf{3D CE}}
& DeepONet & $8.94\text{E-}2$ & $6.25\text{E-}1$ & $1.28\text{E-}1$ & $8.21\text{E-}1$ \\
& FNO      & $2.75\text{E-}2$ & $1.38\text{E-}1$ & $5.04\text{E-}2$ & $2.48\text{E-}1$ \\
& UNO      & $9.87\text{E-}3$ & $5.22\text{E-}2$ & $1.68\text{E-}2$ & $8.75\text{E-}2$ \\
& WNO      & $5.94\text{E-}2$ & $3.68\text{E-}1$ & $1.05\text{E-}1$ & $5.92\text{E-}1$ \\
& PINO     & $2.43\text{E-}2$ & $1.18\text{E-}1$ & $4.49\text{E-}2$ & $2.19\text{E-}1$ \\
& LNO      & $9.74\text{E-}2$ & $6.84\text{E-}1$ & $1.62\text{E-}1$ & $9.85\text{E-}1$ \\
& LaMO     & $4.74\text{E-}2$ & $4.87\text{E-}1$ & $8.25\text{E-}2$ & $4.78\text{E-}1$ \\
& GeoIncNO & $\mathbf{5.43\text{E-}3}$ & $\mathbf{2.87\text{E-}2}$ &
$\mathbf{9.24\text{E-}3}$ & $\mathbf{4.81\text{E-}2}$ \\

\cmidrule(lr){1-6}

\multirow{8}{*}{\textbf{3D Maxwell}}
& DeepONet & $8.74\text{E-}2$ & $1.15\text{E-}1$ & $1.25\text{E-}1$ & $1.58\text{E+}0$ \\
& FNO      & $1.89\text{E-}2$ & $2.67\text{E-}2$ & $3.21\text{E-}2$ & $3.52\text{E-}1$ \\
& UNO      & $2.76\text{E-}2$ & $3.58\text{E-}2$ & $4.52\text{E-}2$ & $5.16\text{E-}1$ \\
& WNO      & $3.82\text{E-}2$ & $5.76\text{E-}2$ & $6.52\text{E-}2$ & $7.84\text{E-}1$ \\
& PINO     & $1.68\text{E-}2$ & $2.32\text{E-}2$ & $2.79\text{E-}2$ & $3.05\text{E-}1$ \\
& LNO      & $6.27\text{E-}2$ & $8.95\text{E-}2$ & $8.96\text{E-}2$ & $1.14\text{E+}0$ \\
& LaMO     & $2.31\text{E-}2$ & $4.68\text{E-}2$ & $3.94\text{E-}2$ & $4.43\text{E-}1$ \\
& GeoIncNO & $\mathbf{9.92\text{E-}3}$ & $\mathbf{1.33\text{E-}2}$ &
$\mathbf{1.47\text{E-}2}$ & $\mathbf{1.71\text{E-}1}$ \\

\bottomrule
\end{tabular}
}
\end{table}

\subsection{Long-Horizon Rollout Stability}
\label{subsec:rollout}
To evaluate the long-horizon stability of GeoIncNO, we conduct autoregressive rollout experiments across all six PDE benchmarks. Starting from the initial condition, each model is applied autoregressively, where the prediction at the current step is fed back as the input for the next step. We report MSE, Rel-\(L_2\), and Rel-\(H^1\) at five checkpoints, \(0.2T\)--\(1.0T\), to measure how errors accumulate over the horizon. All models are evaluated under the same rollout protocol, initial conditions, prediction horizon, and evaluation metrics.
As shown in Figure~\ref{fig:rollout} (in Appendix~\ref{app:rollout}), GeoIncNO attains the lowest error at every checkpoint on all three metrics across the 1D, 2D, and 3D benchmarks, and its advantage widens with the horizon. It stays nearly flat on the chaotic 1D KS benchmark where several baselines diverge, and preserves gradient-level structure on 2D NS and SW where competing operators inflate Rel-\(H^1\). Compared with the FNO backbone and the closest latent-space baseline LaMO, GeoIncNO shows a markedly slower error-accumulation slope rather than merely a lower one-step error, so the gap is largest at the late checkpoints where long-horizon stability matters most.
Overall, these results indicate that explicitly structuring the latent transition increment, rather than repeatedly applying an unconstrained state-to-state update, suppresses the high-frequency error accumulation that destabilizes autoregressive rollout.

\subsection{Ablation Study}
\label{subsec:ablation}
To examine whether each component contributes to the long-horizon
prediction behavior of GeoIncNO, we conduct component ablations on the
2D Navier--Stokes (NS) benchmark. We define the variants as follows:
i) \textbf{Base LatentNO} contains the encoder, latent backbone,
residual latent-state update, and decoder, without geometric shaping or
structured reconstruction; ii) \textbf{+ Fixed-band LIG} adds latent
increment geometry (LIG), where low-rank projectors shape the latent
increment over fixed frequency bands; iii) \textbf{+ Active Bands}
replaces fixed bands with spectrum-adaptive active bands constructed
from the increment spectral distribution; iv) \textbf{+ MFDR} adds
mean--fluctuation decoupled reconstruction, which separately fuses
temporal mean and zero-mean fluctuation components; v) \textbf{+ FFPC}
applies full-field phase correction to the reconstructed prediction;
and vi) \textbf{GeoIncNO} keeps the same components but restricts phase
correction to the zero-mean fluctuation component. Each row in
Table~\ref{tab:ablation} is cumulative with respect to the previous row
unless otherwise specified.
\begin{table}[t!]
\centering
\caption{Cumulative component ablation on the 2D NS}
\label{tab:ablation}
\small
\setlength{\tabcolsep}{4pt}
\begin{threeparttable}
\begin{tabular}{lcccc}
\toprule
Variant
& Rel-$L_2\;\downarrow$
& C-RMSE\,$\downarrow$
& WLR\,$\downarrow$
& \shortstack{Mean\\Drift}\,$\downarrow$ \\
\midrule
\midrule
Base LatentNO
& 2.18E-02 & 1.91E-01 & 2.95E-02 & 2.35E-03 \\
+Fixed-band LIG
& 1.78E-02 & 1.52E-01 & 2.40E-02 & 1.84E-03 \\
+Active Bands
& 1.32E-02 & 1.10E-01 & 1.65E-02 & 1.32E-03 \\
+MFDR
& 9.10E-03 & 8.72E-02 & 1.34E-02 & 1.05E-04 \\
+FFPC
& 8.02E-03 & 8.05E-02 & 1.20E-02 & 3.42E-04 \\
GeoIncNO
& \textbf{7.54E-03} & \textbf{7.87E-02}
& \textbf{1.13E-02} & \textbf{6.21E-05} \\
\bottomrule
\end{tabular}
\end{threeparttable}
\end{table}
Table~\ref{tab:ablation} shows monotone gains from each component. Fixed
increment geometry already improves over the unconstrained baseline
(Rel-$L_2$: 2.18E-02 to 1.78E-02), and active bands
reduce all four metrics further, supporting spectrum-adaptive shaping.
MFDR produces the largest Mean Drift drop, from 1.32E-03 to
1.05E-04, while also lowering prediction and spectral error.
Full-field phase correction lowers Rel-$L_2$ but raises Mean Drift back
to 3.42E-04, whereas restricting it to the zero-mean
fluctuation (GeoIncNO) reaches the best value on every metric and drives
Mean Drift to 6.21E-05. Overall, active-band increment
geometry and mean--fluctuation decoupling jointly improve spectral
fidelity and rollout stability while confining phase correction to the
fluctuation subspace preserves the temporal mean.

\subsection{Additional Results}
\label{subsec:additional_results}

Additional experiments further evaluate GeoIncNO beyond the standard
benchmark setting.
Cross-parameter generalization to an unseen Reynolds number is reported
in Appendix~\ref{subsec:cross_parameter}, while cross-resolution
transfer from coarse to finer grids is examined in
Appendix~\ref{subsec:cross-resolution}.
Extended rollout behavior under the unseen parameter regime is analyzed
in Appendix~\ref{sec:long}.
Results on the RealPDEBench fluid--structure interaction task are
provided in Appendix~\ref{app:realpdebench}, and qualitative comparisons
across competitive 1D, 2D, and 3D systems are shown in
Appendix~\ref{app:qualitative}.

\section{Conclusion}
\label{sec:conclusion}

We proposed GeoIncNO for stable long-horizon PDE prediction. GeoIncNO treats the latent transition increment as the object governing autoregressive evolution, rather than only improving the latent state representation or direct state-to-state mapping. This formulation allows the repeated dynamical update to be regularized before residual state advancement. Specifically, active-band low-rank projection shapes the spectral-channel geometry of latent increments, and mean--fluctuation decoupled reconstruction separates stable mean structures, dynamic fluctuations, and fluctuation-level phase correction in the physical output space. Experiments on six one-, two-, and three-dimensional PDE benchmarks have shown that GeoIncNO improves prediction accuracy, spectral fidelity, and rollout stability over competitive neural-operator baselines. Ablation, cross-parameter, cross-resolution, and qualitative results have further confirmed that structured latent increments and mean--fluctuation decoupling reduce error accumulation during autoregressive prediction.

\section{Limitations and Ethical Considerations}
GeoIncNO currently calibrates active bands from training-set spectral
statistics and keeps them fixed during inference.
Future work may explore adaptive band construction and broader
validation under more diverse geometries, boundary conditions, and
distribution shifts.
In addition, incorporating physical constraints or uncertainty
estimation could further improve reliability for very long rollouts.
This work uses no human subjects,
personal information, or privacy-sensitive data.

\section{Generative AI Usage}
Generative AI was used only for language editing.

\bibliographystyle{ACM-Reference-Format}
\bibliography{software}

\appendix

\section{Related Work}
\label{app:RL}
\subsection{Neural Operators and Latent Dynamics}
Unlike physics-informed neural networks~\cite{raissi2019pinns,karniadakis2021physics}, which embed governing equations into the loss and are typically retrained per instance, neural operators~\cite{kovachki2023neural} learn mappings between function spaces, with FNO~\cite{DBLP:conf/iclr/LiKALBSA21} and DeepONet~\cite{lu2021deeponet} as the two representative architectures. Subsequent work follows two routes.
The first route strengthens the operator backbone: multi-scale and adaptive Fourier mixing~\cite{wen2022u,guibas2022afno}, spectral-basis enrichment~\cite{gupta2021multiwavelet,hu2025wavelet,you2025mscalefno}, attention-based interactions in place of fixed spectral mixing~\cite{li2023oformer,hao2023gnot}, and Koopman-linearized Fourier operators~\cite{xiong2024koopman}.
The second route compresses PDE dynamics into a learned latent space following the encode--evolve--decode paradigm~\cite{morton2018deep,lusch2018koopman,brunton2022koopman}, realized within neural operators by LSM~\cite{wu2023lsm}, LNO~\cite{wang2024lno}, and PI-Latent-NO~\cite{karumuri2025pilatentno}.
Both routes improve prediction by enhancing backbone expressiveness, broadening spectral coverage, or learning compact latent representations. However, they mostly model the state-to-state mapping or latent-state evolution, while the latent increment that drives temporal evolution is rarely modeled as a structured object in its own right.

\section{Motivation}
\label{sec:moti}
\subsection{Mean--Fluctuation Separation}
\label{sec:app_mean_fluctuation}
The physical output contains components with different temporal properties.
As shown in Figure~\ref{fig:mean_fluctuation}, the temporal mean $\mu(u)$ captures the stable background or slowly varying structure, whereas the zero-mean fluctuation $r(u)$ represents non-stationary oscillations and phase-related dynamics.
This decomposition suggests that mean errors and fluctuation errors should not be treated identically: mean errors correspond to mean drift, while fluctuation errors are more related to oscillatory mismatch and phase misalignment.
This motivates a structured reconstruction strategy that decouples mean and fluctuation components and restricts phase correction to the zero-mean fluctuation field.
\begin{figure}[t]
    \centering
    \includegraphics[width=\linewidth]{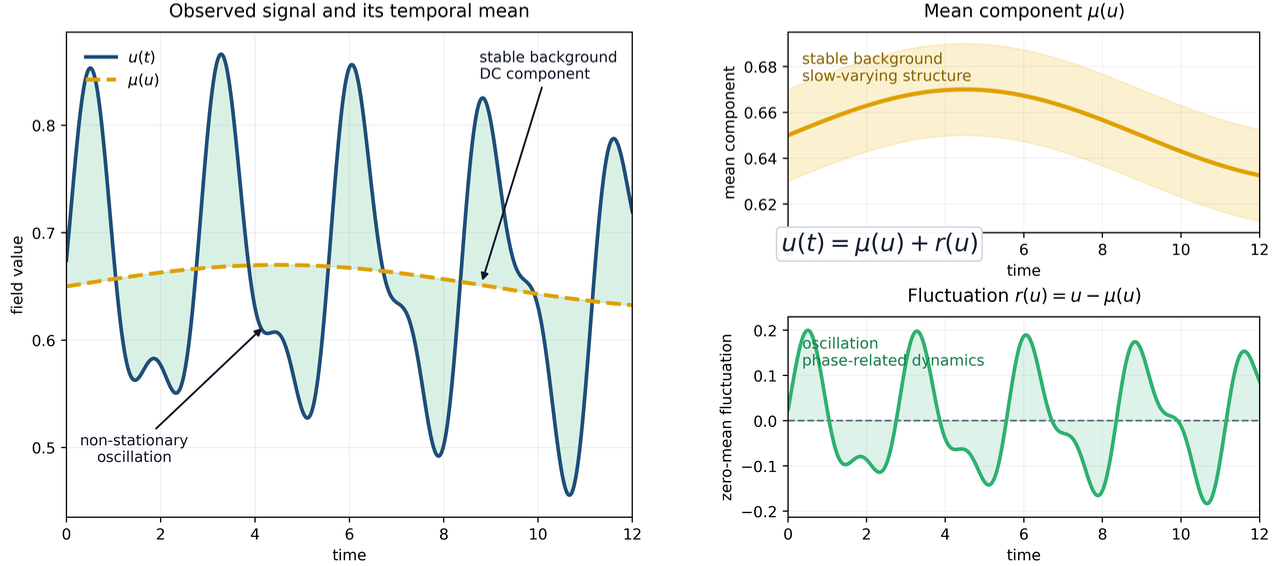}
    \caption{Mean--fluctuation decomposition of a physical time series, separating the stable mean component from the zero-mean fluctuation dynamics.}
    \label{fig:mean_fluctuation}
\end{figure}

\subsection{Frequency-Dependent Band Geometry}
\label{sec:app_band_geometry}
Beyond the non-uniform spectral support, different active frequency bands also exhibit distinct latent-channel geometries.
As shown in Figure~\ref{fig:band_geometry}, the left panel visualizes the latent channel correlation matrices within representative low-, middle-, and high-frequency active bands.
Different bands present clearly different channel-coupling patterns, indicating that the geometric organization of latent increments varies across frequency regions.
The right panel further reports the effective rank and top eigen energy of different active bands.
These statistics differ noticeably across bands, suggesting that the complexity and concentration of channel-wise variation are frequency dependent.
A single global geometric constraint is therefore unlikely to characterize all frequency regions adequately.
This supports a band-wise design in which each active band uses an independent low-rank geometric projector, allowing incremental geometry to match local spectral and channel structures.

\begin{figure}[t]
    \centering
    \includegraphics[width=\linewidth]{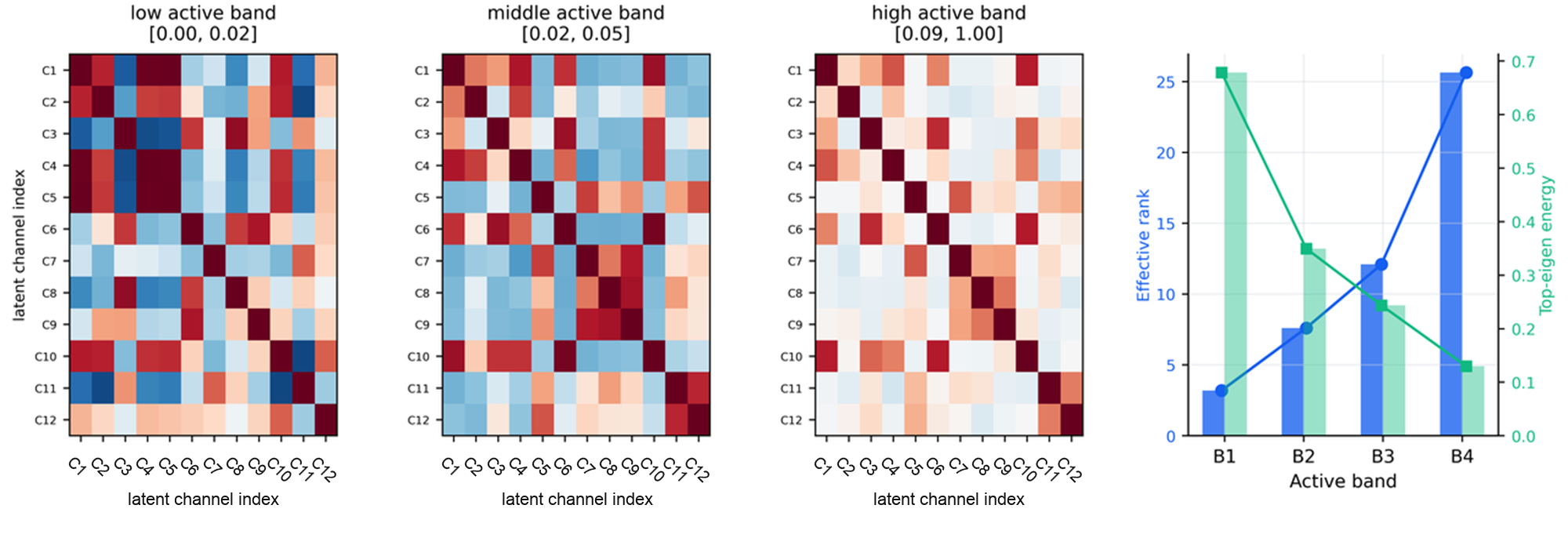}
    \caption{Frequency-dependent channel geometry of latent increments. Distinct channel-correlation matrices and band-dependent rank statistics indicate that a single global geometry is insufficient for all active bands.}
    \label{fig:band_geometry}
\end{figure}

\section{Method}

\subsection{Theoretical Analysis}
\label{sec:theory}

This section provides a conditional theoretical interpretation of GeoIncNO. The goal is not to prove that the learned model always outperforms existing neural operators, but to clarify how increment-centered modeling and active-band geometric shaping can reduce physical prediction error under explicit structural assumptions.

\subsubsection{Preliminaries and Notation}

Let $\Omega \subset \mathbb{R}^{d_s}$ denote the spatial domain, where
$d_s$ is the spatial dimension. In this work, $d_s=1,2,3$ corresponds
to the one-, two-, and three-dimensional PDE benchmarks considered in
the experiments. Let $I_{\mathrm{in}}$ and $I_{\mathrm{out}}$ denote
the input and output time windows, respectively. We consider input and
output physical fields
$u_{\mathrm{in}}\in\mathcal{X}_{\mathrm{in}}
:=L^2(I_{\mathrm{in}}\times\Omega;\mathbb{R}^{C_{\mathrm{in}}})$
and
$u_{\mathrm{out}}\in\mathcal{X}_{\mathrm{out}}
:=L^2(I_{\mathrm{out}}\times\Omega;\mathbb{R}^{C_{\mathrm{out}}})$,
where $C_{\mathrm{in}}$ and $C_{\mathrm{out}}$ are the numbers of input
and output physical variables. Unless otherwise specified, $\|\cdot\|$
denotes the $L^2$ norm in the corresponding function space, and its
discrete Euclidean or Frobenius counterpart on finite grids.

Let \(\mathcal{G}_\tau:\mathcal{A}\subset\mathcal{X}_{\mathrm{in}}\rightarrow\mathcal{X}_{\mathrm{out}}\) denote the ground-truth PDE solution operator over a prediction span \(\tau>0\), where \(\mathcal{A}\) is the admissible set of input trajectories. 
The target output is \(u_{\mathrm{out}}^\star=\mathcal{G}_\tau(u_{\mathrm{in}})\).

To compare input and output fields in a common latent space, we introduce two compatible encoding maps \(E_{\mathrm{in}}:\mathcal{X}_{\mathrm{in}}\rightarrow\mathcal{Z}\) and \(E_{\mathrm{out}}:\mathcal{X}_{\mathrm{out}}\rightarrow\mathcal{Z}\), where \(\mathcal{Z}\) is a latent function space. 
Here, \(E_{\mathrm{in}}\) corresponds to the encoder \(E_\phi\) used in the main architecture, while \(E_{\mathrm{out}}\) is introduced only for analysis to embed the ground-truth output into the same latent space. 
The current latent state and the ground-truth next latent state are defined as \(z_t=E_{\mathrm{in}}(u_{\mathrm{in}})\) and \(z_{t+1}^\star=E_{\mathrm{out}}(\mathcal{G}_\tau(u_{\mathrm{in}}))\). 
The ground-truth latent increment is then
\begin{equation}
    \Delta_\tau^\star(u_{\mathrm{in}})
    :=
    z_{t+1}^\star-z_t
    =
    E_{\mathrm{out}}(\mathcal{G}_\tau(u_{\mathrm{in}}))
    -
    E_{\mathrm{in}}(u_{\mathrm{in}}).
\end{equation}

GeoIncNO learns a latent increment operator \(\Delta_\tau^{\mathrm{pred}}\) and predicts the next latent state by residual advancement, \(z_{t+1}^{\mathrm{pred}}=z_t+\Delta_\tau^{\mathrm{pred}}(z_t)\). 
The corresponding physical-space prediction is written as \(\widehat{\mathcal{G}}_\tau(u_{\mathrm{in}})=D(z_{t+1}^{\mathrm{pred}})\), where \(D:\mathcal{Z}\rightarrow\mathcal{X}_{\mathrm{out}}\) denotes the overall decoding map. 
In the actual GeoIncNO architecture, \(D\) is implemented by the increment decoder, the state decoder, and the mean-fluctuation reconstruction module; in the theoretical analysis, we treat them as a unified decoder when deriving error bounds.

For a time-dependent field
$v\in L^2(I\times\Omega;\mathbb{R}^C)$, we define the temporal mean and zero-mean fluctuation operators as
$\mu_t(v)(x):=\frac{1}{|I|}\int_I v(t,x)\,dt$ and
$\operatorname{center}_t(v)(t,x):=v(t,x)-\mu_t(v)(x)$. By construction,
$\mu_t(\operatorname{center}_t(v))=0$. This continuous definition
corresponds to the discrete temporal mean $\mu(\cdot)$ and centering operation used in the main method. We use $\mathcal{F}_{\mathbf{x}}$ to
denote the spatial spectral transform over all $d_s$ spatial dimensions, with frequency variable
$\xi\in\mathbb{R}^{d_s}$ in the continuous setting or
$\mathbf{k}\in\mathbb{Z}^{d_s}$ on a discrete grid. We use
$\mathcal{F}_t$ to denote the temporal Fourier transform, with temporal frequency variable $\omega\in\mathbb{R}$. A hat over a variable denotes its spectral representation unless it is explicitly used for a model prediction, such as $\hat{u}$.

\subsubsection{Increment-Centered Latent Evolution}

A standard latent-space neural operator maps the current latent state directly to the next latent state, i.e., \(u_{\mathrm{in}}\xrightarrow{E_{\mathrm{in}}}z_t\xrightarrow{\mathcal{K}_{\theta}}z_{t+1}\xrightarrow{D}\hat{u}\), where \(\mathcal{K}_{\theta}:\mathcal{Z}\rightarrow\mathcal{Z}\). 
Such a formulation focuses on learning a state-to-state transition. 
GeoIncNO instead adopts an increment-centered formulation, \(z_{t+1}^{\mathrm{pred}}=z_t+\Delta_\tau^{\mathrm{pred}}(z_t)\), which separates the current latent state from the direction that drives its evolution.

The increment formulation is consistent with the integral form of an evolutionary PDE. 
Suppose that a physical trajectory satisfies \(\partial_t u(t)=\mathcal{N}(u(t))\). 
Then the exact time advancement over a span \(\tau\) satisfies
\begin{equation}
    u(t+\tau)-u(t)
    =
    \int_t^{t+\tau}\mathcal{N}(u(s))\,ds
    =
    \tau\overline{\mathcal{N}}_{[t,t+\tau]},
\end{equation}
where \(\overline{\mathcal{N}}_{[t,t+\tau]}:=\frac{1}{\tau}\int_t^{t+\tau}\mathcal{N}(u(s))\,ds\) is the average evolution direction over \([t,t+\tau]\). 
Thus, learning the future state can be equivalently viewed as learning the state increment induced by the underlying dynamics.

The same interpretation applies in the latent space. 
If the latent trajectory \(z(t)\) is differentiable, then
\begin{equation}
    \Delta z^\star(t,\tau)
    :=
    z(t+\tau)-z(t)
    =
    \int_t^{t+\tau}\partial_s z(s)\,ds.
\end{equation}
If \(\partial_t z\) is locally Lipschitz in time, then \(\Delta z^\star(t,\tau)=\tau\partial_t z(t)+O(\tau^2)\), showing that the latent increment represents the average local evolution direction up to higher-order terms.

GeoIncNO parameterizes this latent increment through a raw increment prediction followed by active-band geometric correction:
\begin{align}
    \delta z_{\mathrm{raw}}
    &=
    G_\psi(\mathcal{T}_\theta(z_t)), \\
    \Delta_\tau^{\mathrm{pred}}(z_t)
    &=
    \delta z_{\mathrm{raw}}
    +
    \eta\left(P_{\mathrm{AB}}(\delta z_{\mathrm{raw}})-\delta z_{\mathrm{raw}}\right), \\
    z_{t+1}^{\mathrm{pred}}
    &=
    z_t+\Delta_\tau^{\mathrm{pred}}(z_t).
\end{align}
In this form, \(G_\psi(\mathcal{T}_\theta(\cdot))\) provides an unconstrained transition direction, while \(P_{\mathrm{AB}}\) supplies a structured active-band correction. 
Therefore, the geometric constraint acts on the latent increment that drives state evolution, rather than on the full latent state itself.

\subsubsection{Common Assumptions}
\label{subsec:assumptions}

The following assumptions are used to provide conditional theoretical interpretations of GeoIncNO. 
They are not intended to claim that any trained instance of GeoIncNO is guaranteed to outperform existing neural operators. 
Instead, they specify the structural conditions under which the proposed increment-centered and active-band designs can be related to prediction error control.

\textbf{A0. Function spaces and spectral representation.}
We assume that the spatial domain $\Omega\subset\mathbb{R}^{d_s}$ is either periodic or admits an orthogonal spectral basis compatible with the boundary conditions. For periodic domains,
$\mathcal{F}_{\mathbf{x}}$ is the $d_s$-dimensional Fourier transform.
For non-periodic domains, the Fourier basis can be replaced by a
Laplacian eigenbasis, a Chebyshev basis, or another orthogonal basis adapted to the boundary conditions. All spectral decompositions, orthogonality arguments, and applications of Parseval's identity below are understood with respect to the chosen basis.

\textbf{A1. Latent reconstruction and decoder regularity.}
We assume that the latent representation preserves the information of the ground-truth solution trajectories up to a bounded reconstruction error. 
Specifically, for the output-side encoder \(E_{\mathrm{out}}\) and the unified decoder \(D\), there exists \(\varepsilon_{\mathrm{lat}}\ge0\) such that
\begin{equation}
    \sup_{u_{\mathrm{out}}\in \mathcal{G}_\tau(\mathcal{A})}
    \left\|
    D(E_{\mathrm{out}}(u_{\mathrm{out}}))-u_{\mathrm{out}}
    \right\|_{\mathcal{X}_{\mathrm{out}}}
    \le
    \varepsilon_{\mathrm{lat}} .
\end{equation}
We also assume that \(D\) is Lipschitz continuous on the relevant latent trajectory set; that is, there exists \(L_D>0\) such that
\begin{equation}
    \left\|
    D(z_1)-D(z_2)
    \right\|_{\mathcal{X}_{\mathrm{out}}}
    \le
    L_D
    \left\|
    z_1-z_2
    \right\|_{\mathcal{Z}} .
\end{equation}
This assumption allows latent increment errors to be transferred to physical-space prediction errors.

\textbf{A2. Active-band calibration.}
In GeoIncNO, active bands are calibrated from the spectral energy distribution of the model-predicted raw increment \(\delta z_{\mathrm{raw}}\), whereas the theoretical analysis concerns the spectral structure of the ground-truth latent increment \(\Delta_\tau^\star\). 
We assume that the calibrated effective spectral support \(\Omega_{\mathrm{eff}}\) covers the dominant spectral support of \(\Delta_\tau^\star\) up to a calibration error \(\varepsilon_{\mathrm{cal}}\):
\begin{equation}
    \left\|
    \mathcal{F}_{\mathbf{x}}(\Delta_\tau^\star)
    \right\|_{\Omega_{\mathrm{eff}}^c}^2
    \le
    \varepsilon_{\mathrm{cal}}^2 .
\end{equation}
Here, \(\mathcal{F}_{\mathbf{x}}(\Delta_\tau^\star)\) denotes the spectral representation of the ground-truth latent increment, and \(\|\cdot\|_{\Omega_{\mathrm{eff}}^c}\) denotes the spectral \(L^2\) norm restricted to the complement of \(\Omega_{\mathrm{eff}}\).

\textbf{A3. Approximate low-rank structure within active bands.}
Let \(\{\mathcal{B}_m\}_{m=1}^{M}\) be the active bands, which are mutually disjoint and cover \(\Omega_{\mathrm{eff}}\). 
For each band \(\mathcal{B}_m\), we assume that there exists a low-dimensional subspace \(\mathcal{U}_m\subset\mathbb{R}^d\) with orthogonal projector \(Q_m\) such that the ground-truth latent increment has small residual energy outside these subspaces:
\begin{equation}
    \sum_{m=1}^{M}
    \left\|
    (I-Q_m)\mathcal{F}_{\mathbf{x}}(\Delta_{\tau,m}^\star)
    \right\|^2
    \le
    \varepsilon_{\mathrm{band}}^2 .
\end{equation}
Here, \(\Delta_{\tau,m}^\star\) denotes the restriction of \(\Delta_\tau^\star\) to the \(m\)-th active band in the spectral domain. 
This assumption specifies the spectral condition under which low-rank active-band shaping has a theoretical interpretation.

\textbf{A4. Controlled out-of-subspace and out-of-band prediction energy.}
Approximate low-rank structure of the ground-truth increment alone is insufficient to control the model error, since the learned increment may introduce additional energy outside the active low-rank subspaces or outside the effective spectral support. 
We therefore assume that the predicted latent increment \(\Delta_\tau^{\mathrm{pred}}\) satisfies
\begin{equation}
\begin{aligned}
    \sum_{m=1}^{M}
    \left\|
    (I-Q_m)\mathcal{F}_{\mathbf{x}}(\Delta_{\tau,m}^{\mathrm{pred}})
    \right\|^2
    &\le
    \varepsilon_{\mathrm{pred,out}}^2, \\
    \left\|
    \mathcal{F}_{\mathbf{x}}(\Delta_{\tau}^{\mathrm{pred}})
    \right\|_{\Omega_{\mathrm{eff}}^c}^2
    &\le
    \varepsilon_{\mathrm{outband}}^2 .
\end{aligned}
\end{equation}
Here, \(\Delta_{\tau,m}^{\mathrm{pred}}\) denotes the restriction of the predicted latent increment to \(\mathcal{B}_m\) in the spectral domain. 
The quantities \(\varepsilon_{\mathrm{pred, out}}\) and \(\varepsilon_{\mathrm{outband}}\) control the prediction energy outside the active low-rank subspaces and outside the effective spectral support, respectively.
This assumption corresponds to the role of the increment geometry regularizer and the projection consistency loss.

\subsubsection{Latent Increment Error Controls Physical Prediction Error}
\label{subsec:latent-error-control}

We first show that the physical-space prediction error can be controlled by the latent increment error, provided that the decoder is Lipschitz continuous and the latent representation has bounded output-side reconstruction error. 
This result justifies why GeoIncNO focuses on organizing the latent increment.

\begin{theorem}[Latent increment error control]
\label{thm:increment-error}
Under Assumption A1, for any \(u_{\mathrm{in}}\in\mathcal{A}\), the one-step physical prediction error of GeoIncNO satisfies
\begin{equation}
    \left\|
    \widehat{\mathcal{G}}_\tau(u_{\mathrm{in}})
    -
    \mathcal{G}_\tau(u_{\mathrm{in}})
    \right\|_{\mathcal{X}_{\mathrm{out}}}
    \le
    \varepsilon_{\mathrm{lat}}
    +
    L_D
    \left\|
    \Delta_\tau^{\mathrm{pred}}(z_t)
    -
    \Delta_\tau^\star(u_{\mathrm{in}})
    \right\|_{\mathcal{Z}},
\end{equation}
where \(z_t=E_{\mathrm{in}}(u_{\mathrm{in}})\), \(\Delta_\tau^{\mathrm{pred}}(z_t)\) is the predicted latent increment, and \(\Delta_\tau^\star(u_{\mathrm{in}})\) is the ground-truth latent increment.
\end{theorem}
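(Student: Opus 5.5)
The plan is a single triangle inequality, with the ground-truth latent state $z_{t+1}^\star=E_{\mathrm{out}}(\mathcal{G}_\tau(u_{\mathrm{in}}))$ inserted as an intermediate point and passed through the unified decoder $D$. Write the prediction as $\widehat{\mathcal{G}}_\tau(u_{\mathrm{in}})=D(z_{t+1}^{\mathrm{pred}})$ with $z_{t+1}^{\mathrm{pred}}=z_t+\Delta_\tau^{\mathrm{pred}}(z_t)$. Then add and subtract $D(z_{t+1}^\star)$:
\begin{equation*}
\bigl\|\widehat{\mathcal{G}}_\tau(u_{\mathrm{in}})-\mathcal{G}_\tau(u_{\mathrm{in}})\bigr\|
\le
\bigl\|D(z_{t+1}^{\mathrm{pred}})-D(z_{t+1}^\star)\bigr\|
+
\bigl\|D(E_{\mathrm{out}}(\mathcal{G}_\tau(u_{\mathrm{in}})))-\mathcal{G}_\tau(u_{\mathrm{in}})\bigr\|.
\end{equation*}

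The second term is handled by the reconstruction part of Assumption A1. Since $u_{\mathrm{in}}\in\mathcal{A}$, we have $\mathcal{G}_\tau(u_{\mathrm{in}})\in\mathcal{G}_\tau(\mathcal{A})$, so the supremum bound gives at most $\varepsilon_{\mathrm{lat}}$.

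The first term is handled by the Lipschitz part of A1, which bounds it by $L_D\|z_{t+1}^{\mathrm{pred}}-z_{t+1}^\star\|_{\mathcal{Z}}$. The key algebraic identity is that both latent states share the same anchor $z_t=E_{\mathrm{in}}(u_{\mathrm{in}})$. By the definition of the ground-truth increment, $z_{t+1}^\star=z_t+\Delta_\tau^\star(u_{\mathrm{in}})$. Hence $z_{t+1}^{\mathrm{pred}}-z_{t+1}^\star=\Delta_\tau^{\mathrm{pred}}(z_t)-\Delta_\tau^\star(u_{\mathrm{in}})$: the latent state $z_t$ cancels exactly, and the state error equals the increment error. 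Combining the two bounds yields the stated inequality.

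The only delicate point is the domain of validity of the Lipschitz bound. A1 asserts Lipschitz continuity only "on the relevant latent trajectory set", so one must check that both $z_{t+1}^{\mathrm{pred}}$ and $z_{t+1}^\star$ lie in that set. I will take this set to contain the predicted latent states and the encoded ground-truth outputs, and state this reading explicitly in the proof. Everything else is a direct unpacking of the definitions.
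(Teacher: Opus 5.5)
Your proposal is correct and follows the same route as the paper's proof. Like the paper, you insert $D(z_{t+1}^\star)$ via the triangle inequality, bound the first term with the Lipschitz constant using $z_{t+1}^{\mathrm{pred}}-z_{t+1}^\star=\Delta_\tau^{\mathrm{pred}}(z_t)-\Delta_\tau^\star(u_{\mathrm{in}})$, and bound the second term with the reconstruction part of A1; your explicit checks that $\mathcal{G}_\tau(u_{\mathrm{in}})\in\mathcal{G}_\tau(\mathcal{A})$ and that both latent states lie in the set where $D$ is Lipschitz are points the paper leaves implicit.
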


\begin{proof}
Let \(z_t=E_{\mathrm{in}}(u_{\mathrm{in}})\) and \(z_{t+1}^\star=E_{\mathrm{out}}(\mathcal{G}_\tau(u_{\mathrm{in}}))\). 
The predicted and ground-truth next latent states satisfy \(z_{t+1}^{\mathrm{pred}}=z_t+\Delta_\tau^{\mathrm{pred}}(z_t)\) and \(z_{t+1}^\star=z_t+\Delta_\tau^\star(u_{\mathrm{in}})\), respectively. 
Using the unified decoder \(D\), the model prediction is \(\widehat{\mathcal{G}}_\tau(u_{\mathrm{in}})=D(z_{t+1}^{\mathrm{pred}})\).

By adding and subtracting \(D(z_{t+1}^\star)\), we obtain
\begin{align}
    &
    \left\|
    \widehat{\mathcal{G}}_\tau(u_{\mathrm{in}})
    -
    \mathcal{G}_\tau(u_{\mathrm{in}})
    \right\|_{\mathcal{X}_{\mathrm{out}}} \nonumber \\
    &\le
    \left\|
    D(z_{t+1}^{\mathrm{pred}})
    -
    D(z_{t+1}^\star)
    \right\|_{\mathcal{X}_{\mathrm{out}}}
    +
    \left\|
    D(z_{t+1}^\star)
    -
    \mathcal{G}_\tau(u_{\mathrm{in}})
    \right\|_{\mathcal{X}_{\mathrm{out}}}.
\end{align}
By the Lipschitz continuity of \(D\), the first term is bounded by \(L_D\|z_{t+1}^{\mathrm{pred}}-z_{t+1}^\star\|_{\mathcal{Z}}\). 
Since \(z_{t+1}^{\mathrm{pred}}-z_{t+1}^\star=\Delta_\tau^{\mathrm{pred}}(z_t)-\Delta_\tau^\star(u_{\mathrm{in}})\), we have
\begin{equation}
    \left\|
    D(z_{t+1}^{\mathrm{pred}})
    -
    D(z_{t+1}^\star)
    \right\|_{\mathcal{X}_{\mathrm{out}}}
    \le
    L_D
    \left\|
    \Delta_\tau^{\mathrm{pred}}(z_t)
    -
    \Delta_\tau^\star(u_{\mathrm{in}})
    \right\|_{\mathcal{Z}}.
\end{equation}

The second term is bounded by the output-side reconstruction error in Assumption A1:
\begin{equation}
\begin{aligned}
    &
    \left\|
    D(z_{t+1}^\star)
    -
    \mathcal{G}_\tau(u_{\mathrm{in}})
    \right\|_{\mathcal{X}_{\mathrm{out}}} \\
    &=
    \left\|
    D\!\left(E_{\mathrm{out}}(\mathcal{G}_\tau(u_{\mathrm{in}}))\right)
    -
    \mathcal{G}_\tau(u_{\mathrm{in}})
    \right\|_{\mathcal{X}_{\mathrm{out}}} \\
    &\le
    \varepsilon_{\mathrm{lat}}.
\end{aligned}
\end{equation}
Combining the two bounds proves the theorem.
\end{proof}

Algorithm~\ref{alg:GeoIncNO} summarizes the complete inference procedure of GeoIncNO.
Given the input physical fields, the model predicts and geometrically refines a latent increment, advances the latent state, and reconstructs the future fields through dual-branch MFDR with fluctuation-only phase correction.
\begin{algorithm}[t]
    \caption{The inference processing of GeoIncNO}
    \label{alg:GeoIncNO}
    \begin{algorithmic}[1]
        \Require Input physical fields \(u_{\mathrm{in}}\), ABP operator \(P_{\mathrm{AB}}\), correction strength \(\eta\)
        \Ensure Predicted future physical fields \(\hat{u}\)
        
        \State \textbf{Step 1: Latent Increment Prediction}
        \State Encode the input field \(z_{\mathrm{in}} = E_\phi(u_{\mathrm{in}})\) and set \(z_t=z_{\mathrm{in}}\)
        \State Extract transition features \(h_t=\mathcal{T}_\theta(z_t)\)
        \State Predict the raw latent increment \(\delta z_{\mathrm{raw}}=G_\psi(h_t)\)
        
        \State \textbf{Step 2: ABP Correction}
        \State Apply ABP, \(\delta z_{\mathrm{proj}}=P_{\mathrm{AB}}(\delta z_{\mathrm{raw}})\)
        \State Compute the refined increment \(\delta z
            =
            \delta z_{\mathrm{raw}}
            +
            \eta(\delta z_{\mathrm{proj}}-\delta z_{\mathrm{raw}})\)
     
        \State Update the latent state \(z_{t+1}=z_t+\delta z\)
        
        \State \textbf{Step 3: Dual-Branch Reconstruction}
        \State Compute the input-anchored prediction \(u_{\mathrm{corr}}=\alpha A_{\mathrm{phys}}(u_{\mathrm{in}})+D_\Delta(\delta z)\)
        \State Compute the state-decoded prediction \(u_{\mathrm{state}}=D_{\mathrm{state}}(z_{t+1})\)
        
        \State \textbf{Step 4: MFDR Fusion}
        \State Decompose \(u_{\mathrm{corr}}\) and \(u_{\mathrm{state}}\) into mean and fluctuation components
        \State Fuse the mean components with \(\Gamma_\mu\) to obtain \(\hat{\mu}\)
        \State Fuse and center the fluctuation components with \(\Gamma_r\) to obtain \(\hat{r}\)
        
        \State \textbf{Step 5: Fluctuation-Only Phase Correction}
        \State Apply phase correction to the fused fluctuation and obtain \(r_{\mathrm{phase}}\)
        \State Reconstruct the final prediction
        \(
            \hat{u}
            =
            \operatorname{repeat}_t(\hat{\mu})
            +
            \hat{r}
            +
            r_{\mathrm{phase}}
        \)
        
        \State \Return \(\hat{u}\)
    \end{algorithmic}
\end{algorithm}

\section{Experiments}
\subsection{Experiments Setup}
\subsubsection{PDE Benchmarks}
\label{sec:pde_details}

\begin{table}[t!]
\centering
\caption{Summary of PDE benchmarks.}
\label{tab:pde_benchmarks}
\resizebox{\linewidth}{!}{
\begin{tabular}{llllll}
\toprule
Benchmark & Dimension & Domain & Resolution & Time Range & Variables \\
\midrule
Burgers & 1D & $[0,2\pi)$ & $1024$ & $[0,1.0]/100$ & 1 \\
KS & 1D & $[0,64]$ & $1024$ & $[0,10.0]/100$ & 1 \\
NS & 2D & $[0,1)^2$ & $256^2$ & $[0,20]/21$ & 1 \\
SW & 2D & $[0,2\pi)^2$ & $256^2$ & $[0,2.0]/21$ & 3 \\
CE & 3D & $[0,1]^3$ & $32^3$ & $[0,1.0]/20$ & 5 \\
Maxwell & 3D & $[0,1]^3$ & $32^3$ & $[0,1.0]/20$ & 6 \\
\bottomrule
\end{tabular}
}
\end{table}

\begin{table*}[t!]
\centering
\caption{Prediction accuracy and parameter count across six PDE benchmarks.
Results are reported as mean $\pm$ standard deviation over five independent runs; lower values indicate better accuracy.}
\label{tab:full_results}
\begin{threeparttable}
\resizebox{\textwidth}{!}{
\begin{tabular}{llccccccc}
\toprule
PDE & Method
& Rel-\(L_2\)$\downarrow$
& Rel-\(H^1\)$\downarrow$
& MSE$\downarrow$
& WLR$\downarrow$
& F-RMSE$\downarrow$
& C-RMSE$\downarrow$
& Params (M) \\
\midrule
\midrule

\multirow{8}{*}{\textbf{1D Burgers}}
& DeepONet & $8.64\text{E-}2$\,{\scriptsize$\pm\,8.58\text{E-}3$} & $1.68\text{E-}1$\,{\scriptsize$\pm\,7.92\text{E-}3$} & $2.61\text{E-}3$\,{\scriptsize$\pm\,1.79\text{E-}4$} & $1.82\text{E-}1$\,{\scriptsize$\pm\,1.16\text{E-}2$} & $1.59\text{E-}6$\,{\scriptsize$\pm\,1.71\text{E-}7$} & $7.84\text{E-}5$\,{\scriptsize$\pm\,8.04\text{E-}6$} & $0.45$ \\
& FNO      & $1.29\text{E-}2$\,{\scriptsize$\pm\,1.56\text{E-}3$} & $4.68\text{E-}2$\,{\scriptsize$\pm\,2.45\text{E-}3$} & $5.82\text{E-}5$\,{\scriptsize$\pm\,4.71\text{E-}6$} & $2.68\text{E-}2$\,{\scriptsize$\pm\,1.27\text{E-}3$} & $2.39\text{E-}7$\,{\scriptsize$\pm\,1.52\text{E-}8$} & $1.18\text{E-}5$\,{\scriptsize$\pm\,1.04\text{E-}6$} & $0.45$ \\
& UNO      & $9.74\text{E-}2$\,{\scriptsize$\pm\,4.60\text{E-}3$} & $1.76\text{E-}1$\,{\scriptsize$\pm\,1.09\text{E-}2$} & $3.32\text{E-}3$\,{\scriptsize$\pm\,3.33\text{E-}4$} & $2.16\text{E-}1$\,{\scriptsize$\pm\,1.97\text{E-}2$} & $1.87\text{E-}6$\,{\scriptsize$\pm\,1.19\text{E-}7$} & $8.96\text{E-}5$\,{\scriptsize$\pm\,8.52\text{E-}6$} & $0.45$ \\
& WNO      & $9.86\text{E-}3$\,{\scriptsize$\pm\,1.12\text{E-}3$} & $6.73\text{E-}2$\,{\scriptsize$\pm\,3.07\text{E-}3$} & $3.40\text{E-}5$\,{\scriptsize$\pm\,3.86\text{E-}6$} & $1.86\text{E-}2$\,{\scriptsize$\pm\,1.94\text{E-}3$} & $1.29\text{E-}7$\,{\scriptsize$\pm\,9.54\text{E-}9$} & $6.98\text{E-}6$\,{\scriptsize$\pm\,4.06\text{E-}7$} & $0.45$ \\
& PINO     & $2.59\text{E-}2$\,{\scriptsize$\pm\,3.27\text{E-}3$} & $5.72\text{E-}2$\,{\scriptsize$\pm\,4.21\text{E-}3$} & $2.35\text{E-}4$\,{\scriptsize$\pm\,1.24\text{E-}5$} & $5.76\text{E-}2$\,{\scriptsize$\pm\,3.07\text{E-}3$} & $5.27\text{E-}7$\,{\scriptsize$\pm\,6.17\text{E-}8$} & $2.68\text{E-}5$\,{\scriptsize$\pm\,2.58\text{E-}6$} & $0.45$ \\
& LNO      & $7.68\text{E-}2$\,{\scriptsize$\pm\,8.72\text{E-}3$} & $1.44\text{E-}1$\,{\scriptsize$\pm\,1.54\text{E-}2$} & $2.06\text{E-}3$\,{\scriptsize$\pm\,1.87\text{E-}4$} & $1.52\text{E-}1$\,{\scriptsize$\pm\,1.94\text{E-}2$} & $1.33\text{E-}6$\,{\scriptsize$\pm\,1.03\text{E-}7$} & $6.71\text{E-}5$\,{\scriptsize$\pm\,6.17\text{E-}6$} & $0.47$ \\
& LaMO     & $8.72\text{E-}3$\,{\scriptsize$\pm\,1.01\text{E-}3$} & $7.94\text{E-}2$\,{\scriptsize$\pm\,7.75\text{E-}3$} & $2.66\text{E-}5$\,{\scriptsize$\pm\,3.15\text{E-}6$} & $1.43\text{E-}2$\,{\scriptsize$\pm\,1.35\text{E-}3$} & $1.05\text{E-}7$\,{\scriptsize$\pm\,1.10\text{E-}8$} & $5.94\text{E-}6$\,{\scriptsize$\pm\,2.90\text{E-}7$} & $0.45$ \\
& GeoIncNO & $\mathbf{4.80\text{E-}3}$\,{\scriptsize$\pm\,1.75\text{E-}4$} & $\mathbf{2.57\text{E-}2}$\,{\scriptsize$\pm\,9.95\text{E-}4$} & $\mathbf{8.06\text{E-}6}$\,{\scriptsize$\pm\,2.50\text{E-}7$} & $\mathbf{7.87\text{E-}3}$\,{\scriptsize$\pm\,2.88\text{E-}4$} & $\mathbf{5.78\text{E-}8}$\,{\scriptsize$\pm\,1.83\text{E-}9$} & $\mathbf{3.27\text{E-}6}$\,{\scriptsize$\pm\,1.25\text{E-}7$} & $0.45$ \\

\cmidrule(lr){1-9}

\multirow{8}{*}{\textbf{1D KS}}
& DeepONet & $2.98\text{E-}2$\,{\scriptsize$\pm\,2.95\text{E-}3$} & $3.86\text{E-}2$\,{\scriptsize$\pm\,2.93\text{E-}3$} & $6.22\text{E-}3$\,{\scriptsize$\pm\,4.75\text{E-}4$} & $4.08\text{E-}2$\,{\scriptsize$\pm\,2.56\text{E-}3$} & $4.29\text{E-}2$\,{\scriptsize$\pm\,2.90\text{E-}3$} & $3.67\text{E-}1$\,{\scriptsize$\pm\,4.57\text{E-}2$} & $0.45$ \\
& FNO      & $3.22\text{E-}2$\,{\scriptsize$\pm\,3.22\text{E-}3$} & $5.41\text{E-}2$\,{\scriptsize$\pm\,5.24\text{E-}3$} & $7.26\text{E-}3$\,{\scriptsize$\pm\,4.32\text{E-}4$} & $5.04\text{E-}2$\,{\scriptsize$\pm\,5.39\text{E-}3$} & $5.18\text{E-}2$\,{\scriptsize$\pm\,3.05\text{E-}3$} & $4.13\text{E-}1$\,{\scriptsize$\pm\,3.19\text{E-}2$} & $0.45$ \\
& UNO      & $3.58\text{E-}2$\,{\scriptsize$\pm\,4.62\text{E-}3$} & $5.96\text{E-}2$\,{\scriptsize$\pm\,5.92\text{E-}3$} & $8.97\text{E-}3$\,{\scriptsize$\pm\,8.28\text{E-}4$} & $5.51\text{E-}2$\,{\scriptsize$\pm\,5.69\text{E-}3$} & $5.74\text{E-}2$\,{\scriptsize$\pm\,6.70\text{E-}3$} & $4.49\text{E-}1$\,{\scriptsize$\pm\,4.98\text{E-}2$} & $0.45$ \\
& WNO      & $6.22\text{E-}1$\,{\scriptsize$\pm\,4.01\text{E-}2$} & $8.00\text{E-}1$\,{\scriptsize$\pm\,3.82\text{E-}2$} & $2.71\text{E+}0$\,{\scriptsize$\pm\,1.94\text{E-}1$} & $1.35\text{E+}0$\,{\scriptsize$\pm\,9.15\text{E-}2$} & $1.77\text{E+}0$\,{\scriptsize$\pm\,1.11\text{E-}1$} & $7.58\text{E+}1$\,{\scriptsize$\pm\,9.49\text{E+}0$} & $0.45$ \\
& PINO     & $9.29\text{E-}1$\,{\scriptsize$\pm\,1.11\text{E-}1$} & $9.70\text{E-}1$\,{\scriptsize$\pm\,6.96\text{E-}2$} & $6.04\text{E+}0$\,{\scriptsize$\pm\,6.08\text{E-}1$} & $1.15\text{E+}0$\,{\scriptsize$\pm\,9.04\text{E-}2$} & $1.51\text{E+}0$\,{\scriptsize$\pm\,1.85\text{E-}1$} & $1.22\text{E+}2$\,{\scriptsize$\pm\,1.02\text{E+}1$} & $0.45$ \\
& LNO      & $6.38\text{E-}1$\,{\scriptsize$\pm\,4.31\text{E-}2$} & $6.95\text{E-}1$\,{\scriptsize$\pm\,4.58\text{E-}2$} & $2.85\text{E+}0$\,{\scriptsize$\pm\,2.64\text{E-}1$} & $1.18\text{E+}0$\,{\scriptsize$\pm\,7.95\text{E-}2$} & $1.56\text{E+}0$\,{\scriptsize$\pm\,1.48\text{E-}1$} & $7.87\text{E+}1$\,{\scriptsize$\pm\,9.55\text{E+}0$} & $0.47$ \\
& LaMO     & $1.88\text{E-}1$\,{\scriptsize$\pm\,1.48\text{E-}2$} & $3.40\text{E-}1$\,{\scriptsize$\pm\,2.16\text{E-}2$} & $2.47\text{E-}1$\,{\scriptsize$\pm\,3.21\text{E-}2$} & $5.03\text{E-}1$\,{\scriptsize$\pm\,4.44\text{E-}2$} & $6.74\text{E-}1$\,{\scriptsize$\pm\,3.55\text{E-}2$} & $2.34\text{E+}1$\,{\scriptsize$\pm\,1.15\text{E+}0$} & $0.45$ \\
& GeoIncNO & $\mathbf{1.64\text{E-}2}$\,{\scriptsize$\pm\,5.26\text{E-}4$} & $\mathbf{2.12\text{E-}2}$\,{\scriptsize$\pm\,1.09\text{E-}3$} & $\mathbf{1.88\text{E-}3}$\,{\scriptsize$\pm\,1.08\text{E-}4$} & $\mathbf{2.24\text{E-}2}$\,{\scriptsize$\pm\,9.77\text{E-}4$} & $\mathbf{2.36\text{E-}2}$\,{\scriptsize$\pm\,7.16\text{E-}4$} & $\mathbf{2.02\text{E-}1}$\,{\scriptsize$\pm\,8.51\text{E-}3$} & $0.45$ \\

\cmidrule(lr){1-9}

\multirow{8}{*}{\textbf{2D NS}}
& DeepONet & $2.41\text{E-}1$\,{\scriptsize$\pm\,3.13\text{E-}2$} & $2.46\text{E-}1$\,{\scriptsize$\pm\,2.21\text{E-}2$} & $5.69\text{E-}2$\,{\scriptsize$\pm\,7.26\text{E-}3$} & $3.87\text{E-}1$\,{\scriptsize$\pm\,4.57\text{E-}2$} & $3.89\text{E-}1$\,{\scriptsize$\pm\,1.79\text{E-}2$} & $3.26\text{E+}0$\,{\scriptsize$\pm\,3.46\text{E-}1$} & $9.98$ \\
& FNO      & $2.65\text{E-}1$\,{\scriptsize$\pm\,2.73\text{E-}2$} & $6.19\text{E-}1$\,{\scriptsize$\pm\,5.61\text{E-}2$} & $6.88\text{E-}2$\,{\scriptsize$\pm\,4.66\text{E-}3$} & $5.01\text{E-}1$\,{\scriptsize$\pm\,4.98\text{E-}2$} & $5.06\text{E-}1$\,{\scriptsize$\pm\,2.76\text{E-}2$} & $3.62\text{E+}0$\,{\scriptsize$\pm\,2.97\text{E-}1$} & $9.49$ \\
& UNO      & $2.87\text{E-}1$\,{\scriptsize$\pm\,2.40\text{E-}2$} & $6.78\text{E-}1$\,{\scriptsize$\pm\,8.55\text{E-}2$} & $8.07\text{E-}2$\,{\scriptsize$\pm\,9.64\text{E-}3$} & $4.77\text{E-}1$\,{\scriptsize$\pm\,3.21\text{E-}2$} & $4.80\text{E-}1$\,{\scriptsize$\pm\,4.20\text{E-}2$} & $3.91\text{E+}0$\,{\scriptsize$\pm\,2.35\text{E-}1$} & $9.38$ \\
& WNO      & $2.11\text{E-}2$\,{\scriptsize$\pm\,2.59\text{E-}3$} & $3.29\text{E-}2$\,{\scriptsize$\pm\,3.91\text{E-}3$} & $4.36\text{E-}4$\,{\scriptsize$\pm\,3.07\text{E-}5$} & $3.46\text{E-}2$\,{\scriptsize$\pm\,3.44\text{E-}3$} & $3.16\text{E-}2$\,{\scriptsize$\pm\,3.06\text{E-}3$} & $2.68\text{E-}1$\,{\scriptsize$\pm\,1.55\text{E-}2$} & $9.48$ \\
& PINO     & $3.04\text{E-}2$\,{\scriptsize$\pm\,3.34\text{E-}3$} & $4.87\text{E-}2$\,{\scriptsize$\pm\,4.42\text{E-}3$} & $9.06\text{E-}4$\,{\scriptsize$\pm\,1.01\text{E-}4$} & $4.81\text{E-}2$\,{\scriptsize$\pm\,4.33\text{E-}3$} & $4.97\text{E-}2$\,{\scriptsize$\pm\,2.24\text{E-}3$} & $3.96\text{E-}1$\,{\scriptsize$\pm\,2.87\text{E-}2$} & $9.49$ \\
& LNO      & $4.13\text{E-}2$\,{\scriptsize$\pm\,1.93\text{E-}3$} & $5.32\text{E-}2$\,{\scriptsize$\pm\,6.60\text{E-}3$} & $1.67\text{E-}3$\,{\scriptsize$\pm\,2.00\text{E-}4$} & $6.81\text{E-}2$\,{\scriptsize$\pm\,7.88\text{E-}3$} & $6.27\text{E-}2$\,{\scriptsize$\pm\,4.46\text{E-}3$} & $5.13\text{E-}1$\,{\scriptsize$\pm\,2.56\text{E-}2$} & $9.97$ \\
& LaMO     & $1.37\text{E-}2$\,{\scriptsize$\pm\,1.64\text{E-}3$} & $2.42\text{E-}2$\,{\scriptsize$\pm\,3.04\text{E-}3$} & $1.84\text{E-}4$\,{\scriptsize$\pm\,9.62\text{E-}6$} & $2.05\text{E-}2$\,{\scriptsize$\pm\,1.77\text{E-}3$} & $1.78\text{E-}2$\,{\scriptsize$\pm\,9.06\text{E-}4$} & $1.43\text{E-}1$\,{\scriptsize$\pm\,1.57\text{E-}2$} & $9.67$ \\
& GeoIncNO & $\mathbf{7.54\text{E-}3}$\,{\scriptsize$\pm\,4.25\text{E-}4$} & $\mathbf{1.33\text{E-}2}$\,{\scriptsize$\pm\,4.36\text{E-}4$} & $\mathbf{5.57\text{E-}5}$\,{\scriptsize$\pm\,2.54\text{E-}6$} & $\mathbf{1.13\text{E-}2}$\,{\scriptsize$\pm\,5.46\text{E-}4$} & $\mathbf{9.79\text{E-}3}$\,{\scriptsize$\pm\,3.70\text{E-}4$} & $\mathbf{7.87\text{E-}2}$\,{\scriptsize$\pm\,4.74\text{E-}3$} & $9.77$ \\

\cmidrule(lr){1-9}

\multirow{8}{*}{\textbf{2D SW}}
& DeepONet & $1.35\text{E-}1$\,{\scriptsize$\pm\,1.09\text{E-}2$} & $2.39\text{E-}1$\,{\scriptsize$\pm\,1.51\text{E-}2$} & $1.17\text{E-}2$\,{\scriptsize$\pm\,1.06\text{E-}3$} & $2.80\text{E-}1$\,{\scriptsize$\pm\,3.00\text{E-}2$} & $2.15\text{E-}1$\,{\scriptsize$\pm\,1.34\text{E-}2$} & $1.46\text{E+}0$\,{\scriptsize$\pm\,1.04\text{E-}1$} & $9.98$ \\
& FNO      & $7.43\text{E-}2$\,{\scriptsize$\pm\,9.63\text{E-}3$} & $1.49\text{E-}1$\,{\scriptsize$\pm\,1.49\text{E-}2$} & $3.53\text{E-}3$\,{\scriptsize$\pm\,2.91\text{E-}4$} & $1.32\text{E-}1$\,{\scriptsize$\pm\,1.17\text{E-}2$} & $7.65\text{E-}2$\,{\scriptsize$\pm\,4.23\text{E-}3$} & $6.91\text{E-}1$\,{\scriptsize$\pm\,4.43\text{E-}2$} & $9.49$ \\
& UNO      & $8.06\text{E-}2$\,{\scriptsize$\pm\,5.94\text{E-}3$} & $1.61\text{E-}1$\,{\scriptsize$\pm\,1.53\text{E-}2$} & $4.16\text{E-}3$\,{\scriptsize$\pm\,2.68\text{E-}4$} & $1.46\text{E-}1$\,{\scriptsize$\pm\,9.30\text{E-}3$} & $8.45\text{E-}2$\,{\scriptsize$\pm\,4.31\text{E-}3$} & $7.50\text{E-}1$\,{\scriptsize$\pm\,7.40\text{E-}2$} & $9.38$ \\
& WNO      & $1.73\text{E-}1$\,{\scriptsize$\pm\,1.12\text{E-}2$} & $4.61\text{E-}1$\,{\scriptsize$\pm\,5.62\text{E-}2$} & $1.92\text{E-}2$\,{\scriptsize$\pm\,2.26\text{E-}3$} & $5.53\text{E-}1$\,{\scriptsize$\pm\,2.82\text{E-}2$} & $3.22\text{E-}1$\,{\scriptsize$\pm\,2.10\text{E-}2$} & $1.62\text{E+}0$\,{\scriptsize$\pm\,1.65\text{E-}1$} & $9.48$ \\
& PINO     & $7.20\text{E-}2$\,{\scriptsize$\pm\,4.55\text{E-}3$} & $1.44\text{E-}1$\,{\scriptsize$\pm\,8.10\text{E-}3$} & $3.32\text{E-}3$\,{\scriptsize$\pm\,4.13\text{E-}4$} & $1.25\text{E-}1$\,{\scriptsize$\pm\,1.17\text{E-}2$} & $7.27\text{E-}2$\,{\scriptsize$\pm\,6.19\text{E-}3$} & $6.69\text{E-}1$\,{\scriptsize$\pm\,7.47\text{E-}2$} & $9.49$ \\
& LNO      & $9.94\text{E-}2$\,{\scriptsize$\pm\,1.13\text{E-}2$} & $1.78\text{E-}1$\,{\scriptsize$\pm\,1.09\text{E-}2$} & $6.32\text{E-}3$\,{\scriptsize$\pm\,3.37\text{E-}4$} & $9.94\text{E-}2$\,{\scriptsize$\pm\,8.11\text{E-}3$} & $5.77\text{E-}2$\,{\scriptsize$\pm\,4.67\text{E-}3$} & $9.23\text{E-}1$\,{\scriptsize$\pm\,7.82\text{E-}2$} & $9.97$ \\
& LaMO     & $1.14\text{E-}1$\,{\scriptsize$\pm\,1.22\text{E-}2$} & $9.48\text{E-}1$\,{\scriptsize$\pm\,9.69\text{E-}2$} & $8.32\text{E-}3$\,{\scriptsize$\pm\,1.07\text{E-}3$} & $3.07\text{E-}1$\,{\scriptsize$\pm\,1.64\text{E-}2$} & $1.79\text{E-}1$\,{\scriptsize$\pm\,1.42\text{E-}2$} & $1.06\text{E+}0$\,{\scriptsize$\pm\,7.83\text{E-}2$} & $9.67$ \\
& GeoIncNO & $\mathbf{3.96\text{E-}2}$\,{\scriptsize$\pm\,2.37\text{E-}3$} & $\mathbf{7.92\text{E-}2}$\,{\scriptsize$\pm\,2.95\text{E-}3$} & $\mathbf{1.00\text{E-}3}$\,{\scriptsize$\pm\,3.52\text{E-}5$} & $\mathbf{5.47\text{E-}2}$\,{\scriptsize$\pm\,2.44\text{E-}3$} & $\mathbf{3.17\text{E-}2}$\,{\scriptsize$\pm\,1.38\text{E-}3$} & $\mathbf{3.68\text{E-}1}$\,{\scriptsize$\pm\,1.41\text{E-}2$} & $9.77$ \\

\cmidrule(lr){1-9}

\multirow{8}{*}{\textbf{3D CE}}
& DeepONet & $8.94\text{E-}2$\,{\scriptsize$\pm\,5.92\text{E-}3$} & $6.25\text{E-}1$\,{\scriptsize$\pm\,7.72\text{E-}2$} & $2.80\text{E-}3$\,{\scriptsize$\pm\,2.31\text{E-}4$} & $1.28\text{E-}1$\,{\scriptsize$\pm\,1.51\text{E-}2$} & $6.82\text{E-}2$\,{\scriptsize$\pm\,6.26\text{E-}3$} & $8.21\text{E-}1$\,{\scriptsize$\pm\,4.05\text{E-}2$} & $32.93$ \\
& FNO      & $2.75\text{E-}2$\,{\scriptsize$\pm\,3.57\text{E-}3$} & $1.38\text{E-}1$\,{\scriptsize$\pm\,1.60\text{E-}2$} & $2.65\text{E-}4$\,{\scriptsize$\pm\,3.37\text{E-}5$} & $5.04\text{E-}2$\,{\scriptsize$\pm\,6.24\text{E-}3$} & $2.68\text{E-}2$\,{\scriptsize$\pm\,3.14\text{E-}3$} & $2.48\text{E-}1$\,{\scriptsize$\pm\,1.47\text{E-}2$} & $37.78$ \\
& UNO      & $9.87\text{E-}3$\,{\scriptsize$\pm\,8.52\text{E-}4$} & $5.22\text{E-}2$\,{\scriptsize$\pm\,3.30\text{E-}3$} & $3.41\text{E-}5$\,{\scriptsize$\pm\,2.70\text{E-}6$} & $1.68\text{E-}2$\,{\scriptsize$\pm\,8.40\text{E-}4$} & $8.94\text{E-}3$\,{\scriptsize$\pm\,6.90\text{E-}4$} & $8.75\text{E-}2$\,{\scriptsize$\pm\,1.13\text{E-}2$} & $37.43$ \\
& WNO      & $5.94\text{E-}2$\,{\scriptsize$\pm\,4.01\text{E-}3$} & $3.68\text{E-}1$\,{\scriptsize$\pm\,4.11\text{E-}2$} & $1.23\text{E-}3$\,{\scriptsize$\pm\,1.03\text{E-}4$} & $1.05\text{E-}1$\,{\scriptsize$\pm\,8.50\text{E-}3$} & $5.68\text{E-}2$\,{\scriptsize$\pm\,7.18\text{E-}3$} & $5.92\text{E-}1$\,{\scriptsize$\pm\,7.67\text{E-}2$} & $37.79$ \\
& PINO     & $2.43\text{E-}2$\,{\scriptsize$\pm\,2.24\text{E-}3$} & $1.18\text{E-}1$\,{\scriptsize$\pm\,1.25\text{E-}2$} & $2.07\text{E-}4$\,{\scriptsize$\pm\,1.20\text{E-}5$} & $4.49\text{E-}2$\,{\scriptsize$\pm\,3.15\text{E-}3$} & $2.38\text{E-}2$\,{\scriptsize$\pm\,3.03\text{E-}3$} & $2.19\text{E-}1$\,{\scriptsize$\pm\,2.06\text{E-}2$} & $37.78$ \\
& LNO      & $9.74\text{E-}2$\,{\scriptsize$\pm\,8.87\text{E-}3$} & $6.84\text{E-}1$\,{\scriptsize$\pm\,7.43\text{E-}2$} & $3.32\text{E-}3$\,{\scriptsize$\pm\,1.66\text{E-}4$} & $1.62\text{E-}1$\,{\scriptsize$\pm\,1.53\text{E-}2$} & $8.64\text{E-}2$\,{\scriptsize$\pm\,7.58\text{E-}3$} & $9.85\text{E-}1$\,{\scriptsize$\pm\,1.16\text{E-}1$} & $37.31$ \\
& LaMO     & $4.74\text{E-}2$\,{\scriptsize$\pm\,2.77\text{E-}3$} & $4.87\text{E-}1$\,{\scriptsize$\pm\,6.17\text{E-}2$} & $7.86\text{E-}4$\,{\scriptsize$\pm\,4.07\text{E-}5$} & $8.25\text{E-}2$\,{\scriptsize$\pm\,5.02\text{E-}3$} & $4.29\text{E-}2$\,{\scriptsize$\pm\,4.10\text{E-}3$} & $4.78\text{E-}1$\,{\scriptsize$\pm\,4.89\text{E-}2$} & $38.69$ \\
& GeoIncNO & $\mathbf{5.43\text{E-}3}$\,{\scriptsize$\pm\,1.99\text{E-}4$} & $\mathbf{2.87\text{E-}2}$\,{\scriptsize$\pm\,9.31\text{E-}4$} & $\mathbf{1.03\text{E-}5}$\,{\scriptsize$\pm\,6.29\text{E-}7$} & $\mathbf{9.24\text{E-}3}$\,{\scriptsize$\pm\,3.43\text{E-}4$} & $\mathbf{4.92\text{E-}3}$\,{\scriptsize$\pm\,2.46\text{E-}4$} & $\mathbf{4.81\text{E-}2}$\,{\scriptsize$\pm\,2.45\text{E-}3$} & $37.90$ \\

\cmidrule(lr){1-9}

\multirow{8}{*}{\textbf{3D Maxwell}}
& DeepONet & $8.74\text{E-}2$\,{\scriptsize$\pm\,6.99\text{E-}3$} & $1.15\text{E-}1$\,{\scriptsize$\pm\,9.20\text{E-}3$} & $7.68\text{E-}3$\,{\scriptsize$\pm\,6.14\text{E-}4$} & $1.25\text{E-}1$\,{\scriptsize$\pm\,1.00\text{E-}2$} & $1.29\text{E-}1$\,{\scriptsize$\pm\,1.03\text{E-}2$} & $1.58\text{E+}0$\,{\scriptsize$\pm\,1.26\text{E-}1$} & $37.80$ \\
& FNO      & $1.89\text{E-}2$\,{\scriptsize$\pm\,1.13\text{E-}3$} & $2.67\text{E-}2$\,{\scriptsize$\pm\,1.60\text{E-}3$} & $4.63\text{E-}4$\,{\scriptsize$\pm\,2.78\text{E-}5$} & $3.21\text{E-}2$\,{\scriptsize$\pm\,1.93\text{E-}3$} & $3.28\text{E-}2$\,{\scriptsize$\pm\,1.97\text{E-}3$} & $3.52\text{E-}1$\,{\scriptsize$\pm\,2.11\text{E-}2$} & $37.78$ \\
& UNO      & $2.76\text{E-}2$\,{\scriptsize$\pm\,1.93\text{E-}3$} & $3.58\text{E-}2$\,{\scriptsize$\pm\,2.51\text{E-}3$} & $9.38\text{E-}4$\,{\scriptsize$\pm\,6.57\text{E-}5$} & $4.52\text{E-}2$\,{\scriptsize$\pm\,3.16\text{E-}3$} & $4.67\text{E-}2$\,{\scriptsize$\pm\,3.27\text{E-}3$} & $5.16\text{E-}1$\,{\scriptsize$\pm\,3.61\text{E-}2$} & $37.43$ \\
& WNO      & $3.82\text{E-}2$\,{\scriptsize$\pm\,2.67\text{E-}3$} & $5.76\text{E-}2$\,{\scriptsize$\pm\,4.03\text{E-}3$} & $1.99\text{E-}3$\,{\scriptsize$\pm\,1.39\text{E-}4$} & $6.52\text{E-}2$\,{\scriptsize$\pm\,4.56\text{E-}3$} & $6.94\text{E-}2$\,{\scriptsize$\pm\,4.86\text{E-}3$} & $7.84\text{E-}1$\,{\scriptsize$\pm\,5.49\text{E-}2$} & $37.80$ \\
& PINO     & $1.68\text{E-}2$\,{\scriptsize$\pm\,1.01\text{E-}3$} & $2.32\text{E-}2$\,{\scriptsize$\pm\,1.39\text{E-}3$} & $3.59\text{E-}4$\,{\scriptsize$\pm\,2.15\text{E-}5$} & $2.79\text{E-}2$\,{\scriptsize$\pm\,1.67\text{E-}3$} & $2.87\text{E-}2$\,{\scriptsize$\pm\,1.72\text{E-}3$} & $3.05\text{E-}1$\,{\scriptsize$\pm\,1.83\text{E-}2$} & $37.78$ \\
& LNO      & $6.27\text{E-}2$\,{\scriptsize$\pm\,5.02\text{E-}3$} & $8.95\text{E-}2$\,{\scriptsize$\pm\,7.16\text{E-}3$} & $4.22\text{E-}3$\,{\scriptsize$\pm\,3.38\text{E-}4$} & $8.96\text{E-}2$\,{\scriptsize$\pm\,7.17\text{E-}3$} & $9.25\text{E-}2$\,{\scriptsize$\pm\,7.40\text{E-}3$} & $1.14\text{E+}0$\,{\scriptsize$\pm\,9.12\text{E-}2$} & $37.31$ \\
& LaMO     & $2.31\text{E-}2$\,{\scriptsize$\pm\,1.62\text{E-}3$} & $4.68\text{E-}2$\,{\scriptsize$\pm\,3.28\text{E-}3$} & $7.25\text{E-}4$\,{\scriptsize$\pm\,5.08\text{E-}5$} & $3.94\text{E-}2$\,{\scriptsize$\pm\,2.76\text{E-}3$} & $4.09\text{E-}2$\,{\scriptsize$\pm\,2.86\text{E-}3$} & $4.43\text{E-}1$\,{\scriptsize$\pm\,3.10\text{E-}2$} & $36.92$ \\
& GeoIncNO & $\mathbf{9.92\text{E-}3}$\,{\scriptsize$\pm\,4.96\text{E-}4$} & $\mathbf{1.33\text{E-}2}$\,{\scriptsize$\pm\,6.65\text{E-}4$} & $\mathbf{1.10\text{E-}4}$\,{\scriptsize$\pm\,5.50\text{E-}6$} & $\mathbf{1.47\text{E-}2}$\,{\scriptsize$\pm\,7.35\text{E-}4$} & $\mathbf{1.49\text{E-}2}$\,{\scriptsize$\pm\,7.45\text{E-}4$} & $\mathbf{1.71\text{E-}1}$\,{\scriptsize$\pm\,8.55\text{E-}3$} & $37.90$ \\

\bottomrule
\end{tabular}
}
\end{threeparttable}
\end{table*}

We evaluate GeoIncNO on six PDE benchmarks generated following the PDEBench-style data generation protocol: Burgers, KS, NS, SW equation, 3D compressible Euler (CE), and Maxwell. 
These benchmarks cover 1D, 2D, and 3D dynamical systems, including convection--diffusion dynamics, spatiotemporal chaos, incompressible vortical flow, free-surface wave propagation, compressible flow, and electromagnetic wave propagation. 

\begin{itemize}
    \item \emph{Burgers} evaluates nonlinear transport and viscous dissipation, and is governed by 
    \(\partial_t u + u\partial_x u = \nu \partial_{xx}u,\quad x\in[0,2\pi)\), 
    where \(u\) is the scalar velocity field and \(\nu\) is the viscosity.

    \item \emph{KS} evaluates long-term prediction under spatiotemporal chaos, and is governed by 
    \(\partial_t u + u\partial_x u + \partial_{xx}u + \partial_{xxxx}u = 0,\quad x\in[0,64]\), 
    where \(u\) denotes the scalar state variable.

    \item \emph{NS} evaluates incompressible vortical dynamics, and is governed by the vorticity formulation 
    \(\partial_t \omega + (\mathbf{u}\cdot\nabla)\omega = \nu\Delta\omega + f,\quad \nabla\cdot\mathbf{u}=0\), 
    where \(\omega\), \(\mathbf{u}\), \(\nu\), and \(f\) denote vorticity, velocity, viscosity, and forcing, respectively. 
    We use the unforced setting \(f=0\).

    \item \emph{SW} evaluates free-surface wave propagation and nonlinear transport, and is governed by 
    \(\partial_t h+\nabla\cdot(h\mathbf{u})=0,\quad \partial_t\mathbf{u}+(\mathbf{u}\cdot\nabla)\mathbf{u}+g\nabla h=0\), 
    where \(h\) is the fluid height, \(\mathbf{u}=(u,v)\) is the horizontal velocity field, and \(g\) is the gravitational acceleration.

    \item \emph{3D CE} evaluates density, momentum, and energy evolution under compressible dynamics, and is governed by the compressible Euler system
    \begin{equation}
    \begin{aligned}
        \partial_t \rho+\nabla\cdot(\rho\mathbf{v}) &= 0,\\
        \partial_t(\rho\mathbf{v})+\nabla\cdot(\rho\mathbf{v}\otimes\mathbf{v}+pI) &= 0,\\
        \partial_t E+\nabla\cdot((E+p)\mathbf{v}) &= 0,
    \end{aligned}
    \end{equation}
    with \(p=(\gamma-1)(E-\frac{1}{2}\rho|\mathbf{v}|^2)\), where \(\rho\), \(\mathbf{v}\), \(p\), and \(E\) denote density, velocity, pressure, and total energy, respectively.

    \item \emph{Maxwell} evaluates electromagnetic wave propagation, and is governed in a source-free vacuum by 
    \(\partial_t\mathbf{E}=c^2\nabla\times\mathbf{B},\quad \partial_t\mathbf{B}=-\nabla\times\mathbf{E}\), 
    with divergence-free constraints \(\nabla\cdot\mathbf{E}=0\) and \(\nabla\cdot\mathbf{B}=0\), where \(\mathbf{E}\), \(\mathbf{B}\), and \(c\) denote the electric field, magnetic field, and wave speed, respectively.
\end{itemize}

\subsubsection{Evaluation Metrics}
\label{subsec:evaluation-metrics}
We report all metrics averaged over the test set to evaluate complementary aspects of prediction quality.
Field-level accuracy is measured by the relative \(L_2\) error
\(\mathrm{Rel}\text{-}L_2=\|u-\hat{u}\|_2/\|u\|_2\) and MSE, where Rel-\(L_2\) provides scale-normalized accuracy and MSE reflects absolute pointwise deviations.
To assess whether the predicted solution also preserves local variation, we report the relative Sobolev error
\(\mathrm{Rel}\text{-}H^1=\|u-\hat{u}\|_{H^1}/\|u\|_{H^1}\), where the \(H^1\) norm includes both field and gradient errors.
This metric is particularly useful for derivative-sensitive dynamics, such as the Kuramoto--Sivashinsky (KS) equation.
To evaluate spectral fidelity, we use the weighted log ratio (WLR)
\(\mathrm{WLR}=\sum_{k\in\mathcal{K}} w_k \left|\log \frac{E_{\mathrm{pred}}(k)+\epsilon}{E_{\mathrm{ref}}(k)+\epsilon}\right|\).
\(E_{\mathrm{pred}}(k)=|\mathcal{F}(\hat{u})(k)|^2\) and
\(E_{\mathrm{ref}}(k)=|\mathcal{F}(u)(k)|^2\) denote the spectral energies of the predicted and reference fields at mode \(k\), respectively.
The weight \(w_k\) is computed from the normalized reference spectral energy, and \(\epsilon\) is used for numerical stability.
For time-dependent rollouts, we additionally report two RMSE-based stability metrics.
The final-step RMSE (F-RMSE) measures the prediction error at the last rollout step, whereas the cumulative RMSE (C-RMSE) measures the accumulated trajectory error over all rollout steps.
When boundary accuracy is explicitly evaluated, boundary RMSE (B-RMSE) is further used to measure the rollout error restricted to boundary regions.
To quantify long-term mean-field preservation, we further report the
relative temporal-mean drift (Mean Drift),
\(\|\mu(u)-\mu(\hat{u})\|_2/(\|\mu(u)\|_2+\epsilon)\), where
\(\mu(\cdot)=\tfrac{1}{T}\sum_{\tau=1}^{T}(\cdot)_\tau\) is the temporal
mean over the rollout; it measures the slowly accumulated bias in the
long-term average field.

In addition to RMSE, Rel-\(L_2\), and F-RMSE, we also report MAE, R$^2$, and FE to provide complementary evaluations of prediction accuracy and rollout stability.
MAE measures the average absolute deviation between the predicted and reference fields,
\(\mathrm{MAE}=\frac{1}{N}\sum_{i=1}^{N}|y_i-\hat{y}_i|\),
where it provides an intuitive measure of pointwise prediction errors.
The coefficient of determination R$^2$ evaluates the agreement between the predicted and reference solutions,
\(R^2=1-\frac{\sum_i(y_i-\hat{y}_i)^2}{\sum_i(y_i-\bar{y})^2}\),
where \(\bar{y}\) denotes the mean value of the reference solution.
A higher R$^2$ indicates that the prediction explains a larger proportion of the variance in the ground-truth field.
For autoregressive evaluation, FE measures the accumulated forecasting error during multi-step rollout and reflects the error propagation behavior over the prediction horizon.
Lower values indicate better performance for RMSE, MAE, Rel-\(L_2\), F-RMSE, and FE, while higher values indicate better performance for R$^2$.

\subsubsection{Implementation Details}
\label{app:impl}

All experiments are implemented in PyTorch and run on a single NVIDIA
RTX 4090 GPU with 24 GB memory. For each benchmark, we use 1,000 trajectories for training and 200 independently generated trajectories for testing. To keep the optimization setting of baseline neural operators consistent, all non-physics-informed baseline models are trained with the MSE prediction loss only. PINO is treated separately because it is a physics-informed baseline, and is trained with the MSE prediction loss together with the PDE residual loss following its original formulation. GeoIncNO is optimized with the objective defined in Section~\ref{subsec:training-objective}, which augments the prediction loss with active-band increment geometry regularization and projection consistency.
We use the Adam optimizer~\cite{DBLP:journals/corr/KingmaB14} with an initial learning rate of \(1\times10^{-4}\). 
A cosine annealing learning-rate scheduler is adopted to decay the learning rate during training gradually. 
Each model is trained for 100 epochs. 
To improve long-horizon stability, all models are trained using the pushforward strategy with a rollout length of \(T=5\), in which predictions are recursively fed back into the model during training. 
For GeoIncNO, we use \(M=4\) active frequency bands, set the low-rank projector dimension to \(r=8\), and set the ABP correction strength to \(\eta=0.05\) unless otherwise specified.
Across benchmarks, only the dimension-dependent operators differ:
1D/2D/3D tasks use Conv1d/2d/3d with matching FFTs, while temporal-window
processing is factored out through temporal projection, so no 4D
convolution is needed and the ABP transform acts only along spatial axes.

\subsection{Long-Horizon Rollout Stability}
\label{app:rollout}
Due to space constraints in the main text, we place the full long-horizon rollout comparison here. Figure~\ref{fig:rollout} reports the MSE, Rel-\(L_2\), and Rel-\(H^1\) of GeoIncNO and all baselines at five rollout checkpoints (\(0.2T\)--\(1.0T\)) across the six PDE benchmarks, complementing the analysis in Section~\ref{subsec:rollout}. Across every benchmark and metric, GeoIncNO attains the lowest error at each checkpoint and shows the slowest error growth over the horizon, confirming that explicitly structuring the latent transition increment suppresses long-horizon error accumulation.

\begin{figure*}[t]
  \centering
  \includegraphics[width=\textwidth]{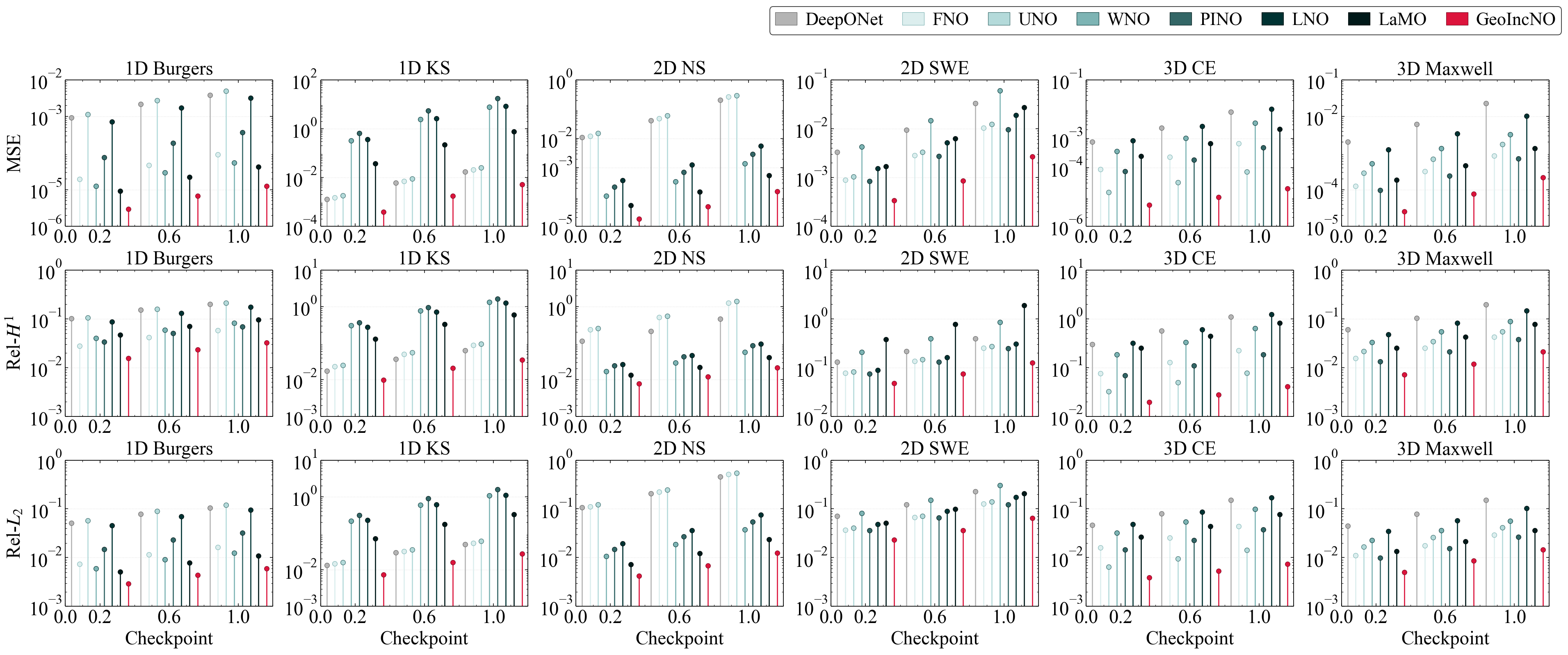}
  \caption{Long-horizon rollout errors across the six PDE benchmarks. Each column is a benchmark and each row a metric (MSE, Rel-\(H^1\), Rel-\(L_2\)); markers report the error at rollout checkpoints \(0.2T\)--\(1.0T\). GeoIncNO (red) attains the lowest error at every checkpoint and accumulates error most slowly over the horizon.}
  \label{fig:rollout}
\end{figure*}

\subsection{Cross-Parameter Generalization}
\label{subsec:cross_parameter}
To evaluate cross-parameter generalization, we conduct experiments on the 2D NS benchmark at resolution \(256\times256\).
The training set is constructed by mixing trajectories generated at two Reynolds numbers, \(\mathrm{Re}=1000\) and \(\mathrm{Re}=10000\), while testing is performed on the unseen intermediate setting \(\mathrm{Re}=5000\).
This setting evaluates whether the model can generalize to an unseen physical-parameter regime rather than only fit a fixed Reynolds-number distribution.
For each training Reynolds number, we use 500 trajectories, and the test set contains 100 independently generated trajectories at \(\mathrm{Re}=5000\).
As shown in Table~\ref{tab:cross_parameter_ns2d}, GeoIncNO achieves the strongest overall generalization to the unseen \(\mathrm{Re}=5000\) setting, obtaining the best result on every reported metric. Compared with UNO, the strongest competing baseline on the main rollout metrics, GeoIncNO reduces F-RMSE from \(7.17\mathrm{E}{-03}\) to \(3.94\mathrm{E}{-03}\), C-RMSE from \(5.99\mathrm{E}{-02}\) to \(3.30\mathrm{E}{-02}\), and WLR from \(8.46\mathrm{E}{-02}\) to \(4.65\mathrm{E}{-02}\). It also obtains the lowest Rel-\(L_2\), Rel-\(H^1\), MSE, and B-RMSE, indicating better field-level, derivative-level, and boundary-region accuracy under Reynolds-number interpolation. These results suggest that GeoIncNO improves cross-parameter robustness under Reynolds-number interpolation.
\begin{table*}[h]
\centering
\caption{Cross-parameter generalization on the 2D NS benchmark under Reynolds-number interpolation (trained on \(\mathrm{Re}=1000/10000\), tested on the unseen \(\mathrm{Re}=5000\)). Lower is better for all metrics.}
\label{tab:cross_parameter_ns2d}
\small
\setlength{\tabcolsep}{5.0pt}
\renewcommand{\arraystretch}{1.1}
\begin{tabular}{lccccccc}
\toprule
Method & F-RMSE$\downarrow$ & C-RMSE$\downarrow$ & B-RMSE$\downarrow$ & WLR$\downarrow$ & MSE$\downarrow$ & Rel-\(L_2\)$\downarrow$ & Rel-\(H^1\)$\downarrow$ \\
\midrule
\midrule
DeepONet & 3.92E-02 & 5.35E-01 & 2.90E-02 & 6.24E-01 & 1.13E-03 & 4.58E-01 & 1.06E+00 \\
FNO      & 2.12E-02 & 1.66E-01 & 6.30E-04 & 2.49E-01 & 1.52E-04 & 1.45E-01 & 3.73E-01 \\
UNO      & 7.17E-03 & 5.99E-02 & 2.81E-04 & 8.46E-02 & 1.88E-05 & 5.21E-02 & 2.49E-01 \\
WNO      & 8.44E-02 & 2.80E-01 & 1.85E-04 & 6.94E-01 & 8.54E-04 & 2.07E-01 & 1.68E+00 \\
PINO     & 1.46E-02 & 1.14E-01 & 4.49E-04 & 1.64E-01 & 7.12E-05 & 9.94E-02 & 3.02E-01 \\
LNO      & 5.56E-02 & 5.31E-01 & 9.13E-03 & 7.28E-01 & 1.30E-03 & 4.53E-01 & 1.66E+00 \\
LaMO     & 2.06E-02 & 1.55E-01 & 6.55E-04 & 2.23E-01 & 1.35E-04 & 1.34E-01 & 8.92E-01 \\
GeoIncNO & \textbf{3.94E-03} & \textbf{3.30E-02} & \textbf{1.02E-04} & \textbf{4.65E-02} & \textbf{5.42E-06} & \textbf{2.87E-02} & \textbf{1.37E-01} \\
\bottomrule
\end{tabular}
\end{table*}

\subsection{Long-Horizon Rollout Stability under Cross-Parameter Interpolation}
\label{sec:long}
To further evaluate rollout stability under an unseen physical-parameter regime, we report prediction errors at different rollout checkpoints on the cross-parameter 2D NS setting. Following the cross-parameter protocol in Appendix~\ref{subsec:cross_parameter}, all models are trained on a mixed dataset at resolution \(256\times256\) constructed from two Reynolds numbers, \(\mathrm{Re}=1000\) and \(\mathrm{Re}=10000\) (500 trajectories each), and tested on the unseen intermediate setting \(\mathrm{Re}=5000\) with 100 independently generated trajectories. The checkpoints correspond to \(0.2T\), \(0.4T\), \(0.6T\), \(0.8T\), and \(1.0T\) (rollout steps \(4\), \(7\), \(10\), \(13\), and \(16\)), which measure how errors accumulate as the autoregressive rollout proceeds under Reynolds-number interpolation.
As shown in Table~\ref{tab:long_horizon}, GeoIncNO consistently achieves the lowest errors across all rollout checkpoints. At the final checkpoint \(1.0T\), GeoIncNO reduces MSE from UNO's \(5.14\mathrm{E}{-05}\) to \(1.56\mathrm{E}{-05}\), Rel-\(L_2\) from \(1.01\mathrm{E}{-01}\) to \(5.54\mathrm{E}{-02}\), and Rel-\(H^1\) from \(4.67\mathrm{E}{-01}\) to \(2.57\mathrm{E}{-01}\). The improvement becomes more pronounced as the rollout horizon increases, indicating that GeoIncNO better suppresses error accumulation and preserves both field-level and derivative-level consistency under unseen Reynolds-number dynamics.
\begin{table*}[h]
\centering
\caption{Long-horizon rollout errors at different prediction checkpoints under Reynolds-number interpolation (trained on \(\mathrm{Re}=1000/10000\), tested on the unseen \(\mathrm{Re}=5000\)).}
\label{tab:long_horizon}
\scriptsize
\setlength{\tabcolsep}{3.0pt}
\renewcommand{\arraystretch}{1.05}
\resizebox{\textwidth}{!}{
\begin{tabular}{lccccc|ccccc|ccccc}
\toprule
\multirow{2}{*}{Model} 
& \multicolumn{5}{c|}{MSE$\downarrow$} 
& \multicolumn{5}{c|}{Rel-\(L_2\)$\downarrow$} 
& \multicolumn{5}{c}{Rel-\(H^1\)$\downarrow$} \\
\cmidrule(lr){2-6} \cmidrule(lr){7-11} \cmidrule(lr){12-16}
& \(0.2T\) & \(0.4T\) & \(0.6T\) & \(0.8T\) & \(1.0T\)
& \(0.2T\) & \(0.4T\) & \(0.6T\) & \(0.8T\) & \(1.0T\)
& \(0.2T\) & \(0.4T\) & \(0.6T\) & \(0.8T\) & \(1.0T\) \\
\midrule
\midrule
DeepONet 
& 9.09E-04 & 1.02E-03 & 1.17E-03 & 1.34E-03 & 1.54E-03
& 4.09E-01 & 4.36E-01 & 4.69E-01 & 5.06E-01 & 5.47E-01
& 1.08E+00 & 1.05E+00 & 1.04E+00 & 1.04E+00 & 1.06E+00 \\
FNO 
& 1.28E-05 & 6.16E-05 & 1.57E-04 & 2.91E-04 & 4.49E-04
& 4.91E-02 & 1.09E-01 & 1.75E-01 & 2.39E-01 & 2.99E-01
& 1.44E-01 & 2.90E-01 & 4.49E-01 & 5.98E-01 & 7.34E-01 \\
UNO 
& 2.33E-06 & 9.23E-06 & 2.06E-05 & 3.50E-05 & 5.14E-05
& 2.09E-02 & 4.19E-02 & 6.30E-02 & 8.27E-02 & 1.01E-01
& 1.14E-01 & 2.04E-01 & 2.96E-01 & 3.84E-01 & 4.67E-01 \\
WNO 
& 2.82E-06 & 2.34E-05 & 1.05E-04 & 7.14E-04 & 7.12E-03
& 2.23E-02 & 6.46E-02 & 1.41E-01 & 3.43E-01 & 8.78E-01
& 1.11E-01 & 3.09E-01 & 8.74E-01 & 2.78E+00 & 8.17E+00 \\
PINO 
& 6.26E-06 & 2.91E-05 & 7.29E-05 & 1.35E-04 & 2.13E-04
& 3.42E-02 & 7.45E-02 & 1.19E-01 & 1.63E-01 & 2.06E-01
& 1.24E-01 & 2.38E-01 & 3.61E-01 & 4.77E-01 & 5.85E-01 \\
LNO 
& 3.76E-04 & 8.43E-04 & 1.46E-03 & 2.20E-03 & 3.09E-03
& 2.65E-01 & 3.95E-01 & 5.21E-01 & 6.42E-01 & 7.65E-01
& 1.05E+00 & 1.50E+00 & 1.90E+00 & 2.26E+00 & 2.60E+00 \\
LaMO 
& 9.69E-06 & 4.87E-05 & 1.33E-04 & 2.61E-04 & 4.23E-04
& 4.23E-02 & 9.54E-02 & 1.59E-01 & 2.24E-01 & 2.88E-01
& 3.96E-01 & 7.43E-01 & 1.08E+00 & 1.37E+00 & 1.64E+00 \\
GeoIncNO 
& \textbf{7.14E-07} & \textbf{2.81E-06} & \textbf{6.24E-06} & \textbf{1.06E-05} & \textbf{1.56E-05}
& \textbf{1.15E-02} & \textbf{2.30E-02} & \textbf{3.46E-02} & \textbf{4.55E-02} & \textbf{5.54E-02}
& \textbf{6.25E-02} & \textbf{1.12E-01} & \textbf{1.63E-01} & \textbf{2.11E-01} & \textbf{2.57E-01} \\
\bottomrule
\end{tabular}
}
\end{table*}

\begin{table*}[t]
\centering
\caption{Cross-resolution generalization results on the 2D NS benchmark. Models are trained at \(32\times32\) and tested at the training resolution and finer resolutions. Lower values indicate better performance.}
\label{tab:cross_resolution_ns2d}
\small
\setlength{\tabcolsep}{3.2pt}
\renewcommand{\arraystretch}{1.05}
\begin{tabular}{llccccccc}
\toprule
Resolution & Method 
& MSE@1$\downarrow$ 
& Rel-\(L_2\)@1$\downarrow$ 
& Rel-\(H^1\)@1$\downarrow$ 
& MSE@16$\downarrow$ 
& nRMSE@16$\downarrow$ 
& Rel-\(L_2\)@16$\downarrow$ 
& Rel-\(H^1\)@16$\downarrow$ \\
\midrule
\midrule
\multirow{8}{*}{\(32\times32\)}
& DeepONet & 9.97E-04 & 3.44E-01 & 6.91E-01 & 1.14E-03 & 9.21E-02 & 4.19E-01 & 7.55E-01 \\
& FNO      & 2.95E-07 & 5.91E-03 & 2.00E-02 & 1.26E-05 & 9.69E-03 & 4.41E-02 & 1.31E-01 \\
& UNO      & 3.77E-07 & 6.68E-03 & 2.23E-02 & 1.59E-05 & 1.09E-02 & 4.95E-02 & 1.34E-01 \\
& WNO      & 1.99E-07 & 4.86E-03 & 1.66E-02 & 8.09E-05 & 2.45E-02 & 1.12E-01 & 2.94E-01 \\
& PINO     & 3.48E-07 & 6.41E-03 & 2.15E-02 & 1.04E-05 & 8.81E-03 & 4.01E-02 & 1.16E-01 \\
& LNO      & 7.27E-05 & 9.28E-02 & 2.87E-01 & 1.90E-03 & 1.19E-01 & 5.41E-01 & 9.90E-01 \\
& LaMO     & 5.75E-07 & 8.25E-03 & 2.95E-02 & 2.01E-04 & 3.87E-02 & 1.76E-01 & 4.89E-01 \\
& GeoIncNO & \textbf{6.03E-08} & \textbf{2.67E-03} & \textbf{9.12E-03} & \textbf{3.15E-06} & \textbf{4.85E-03} & \textbf{2.20E-02} & \textbf{6.38E-02} \\
\cmidrule(lr){1-9}
\multirow{8}{*}{\(64\times64\)}
& DeepONet & 1.02E-03 & 3.46E-01 & 7.59E-01 & 1.16E-03 & 9.28E-02 & 4.22E-01 & 8.08E-01 \\
& FNO      & 7.48E-07 & 9.38E-03 & 7.16E-02 & 1.48E-05 & 1.05E-02 & 4.76E-02 & 2.18E-01 \\
& UNO      & 8.90E-07 & 1.02E-02 & 7.62E-02 & 2.08E-05 & 1.24E-02 & 5.66E-02 & 2.29E-01 \\
& WNO      & 3.40E-06 & 2.00E-02 & 1.13E-01 & 4.77E-04 & 5.96E-02 & 2.71E-01 & 9.14E-01 \\
& PINO     & 8.25E-07 & 9.85E-03 & 7.61E-02 & 1.23E-05 & 9.58E-03 & 4.36E-02 & 1.96E-01 \\
& LNO      & 8.16E-05 & 9.80E-02 & 3.80E-01 & 2.00E-03 & 1.22E-01 & 5.54E-01 & 1.15E+00 \\
& LaMO     & 7.45E-06 & 2.96E-02 & 1.53E-01 & 1.74E-04 & 3.60E-02 & 1.64E-01 & 5.62E-01 \\
& GeoIncNO & \textbf{2.26E-07} & \textbf{5.16E-03} & \textbf{3.94E-02} & \textbf{3.73E-06} & \textbf{5.27E-03} & \textbf{2.40E-02} & \textbf{1.08E-01} \\
\cmidrule(lr){1-9}
\multirow{8}{*}{\(128\times128\)}
& DeepONet & 1.03E-03 & 3.47E-01 & 8.42E-01 & 1.17E-03 & 9.30E-02 & 4.23E-01 & 8.61E-01 \\
& FNO      & 1.03E-06 & 1.10E-02 & 1.37E-01 & 1.58E-05 & 1.08E-02 & 4.91E-02 & 2.73E-01 \\
& UNO      & 1.19E-06 & 1.18E-02 & 1.37E-01 & 2.43E-05 & 1.34E-02 & 6.09E-02 & 3.00E-01 \\
& WNO      & 1.11E-05 & 3.60E-02 & 2.65E-01 & 8.74E-04 & 8.04E-02 & 3.65E-01 & 1.53E+00 \\
& PINO     & 1.09E-06 & 1.13E-02 & 1.36E-01 & 1.32E-05 & 9.87E-03 & 4.49E-02 & 2.50E-01 \\
& LNO      & 8.54E-05 & 1.00E-01 & 4.45E-01 & 2.04E-03 & 1.23E-01 & 5.58E-01 & 1.39E+00 \\
& LaMO     & 8.49E-06 & 3.15E-02 & 2.49E-01 & 1.80E-04 & 3.65E-02 & 1.66E-01 & 6.78E-01 \\
& GeoIncNO & \textbf{3.13E-07} & \textbf{6.05E-03} & \textbf{7.46E-02} & \textbf{4.19E-06} & \textbf{5.57E-03} & \textbf{2.53E-02} & \textbf{1.37E-01} \\
\cmidrule(lr){1-9}
\multirow{8}{*}{\(256\times256\)}
& DeepONet & 1.04E-03 & 3.47E-01 & 9.69E-01 & 1.19E-03 & 9.36E-02 & 4.26E-01 & 9.21E-01 \\
& FNO      & 1.11E-06 & 1.14E-02 & 1.74E-01 & 1.45E-05 & 1.03E-02 & 4.69E-02 & 3.05E-01 \\
& UNO      & 1.28E-06 & 1.22E-02 & 1.72E-01 & 2.52E-05 & 1.36E-02 & 6.18E-02 & 3.65E-01 \\
& WNO      & 1.46E-05 & 4.12E-02 & 3.11E-01 & 9.63E-04 & 8.40E-02 & 3.82E-01 & 1.72E+00 \\
& PINO     & 1.17E-06 & 1.16E-02 & 1.65E-01 & 1.32E-05 & 9.86E-03 & 4.48E-02 & 3.01E-01 \\
& LNO      & 9.01E-05 & 1.02E-01 & 5.15E-01 & 2.16E-03 & 1.26E-01 & 5.72E-01 & 1.62E+00 \\
& LaMO     & 8.82E-06 & 3.20E-02 & 3.60E-01 & 1.91E-04 & 3.74E-02 & 1.70E-01 & 8.20E-01 \\
& GeoIncNO & \textbf{3.69E-07} & \textbf{6.55E-03} & \textbf{9.45E-02} & \textbf{4.53E-06} & \textbf{5.76E-03} & \textbf{2.62E-02} & \textbf{1.67E-01} \\
\bottomrule
\end{tabular}
\end{table*}

\subsection{Cross-Resolution Generalization}
\label{subsec:cross-resolution}
We further evaluate cross-resolution generalization on the 2D NS benchmark.
All models are trained at a low spatial resolution of \(32\times32\) and tested at the training resolution \(32\times32\) as well as finer resolutions \(64\times64\), \(128\times128\), and \(256\times256\).
This setting examines whether a model trained on coarse-grid data can transfer to finer spatial discretizations while maintaining stable autoregressive rollout.
We evaluate both one-step prediction and long-horizon rollout performance.
For one-step prediction, we report MSE@1, Rel-\(L_2\)@1, and Rel-\(H^1\)@1, which measure the immediate prediction error after one rollout step from absolute, relative, and gradient-aware perspectives.
For long-horizon prediction, we report MSE@16, nRMSE@16, Rel-\(L_2\)@16, and Rel-\(H^1\)@16, which measure the prediction quality after 16 autoregressive rollout steps.
Thus, the @1 metrics reflect short-term resolution transfer, while the @16 metrics reflect long-horizon stability under cross-resolution evaluation.
As shown in Table~\ref{tab:cross_resolution_ns2d}, GeoIncNO achieves the best performance at the training resolution \(32\times32\) across both one-step and 16-step rollout metrics, with MSE@1 of \(6.03\mathrm{E}{-08}\), Rel-\(L_2\)@1 of \(2.67\mathrm{E}{-03}\), MSE@16 of \(3.15\mathrm{E}{-06}\), and Rel-\(L_2\)@16 of \(2.20\mathrm{E}{-02}\).
When transferring to finer resolutions, GeoIncNO continues to attain the lowest error on every reported metric, covering both the one-step (@1) and 16-step (@16) settings, indicating that its advantage is preserved under cross-resolution transfer rather than being limited to the training resolution.
At the highest tested resolution \(256\times256\), GeoIncNO obtains MSE@16 of \(4.53\mathrm{E}{-06}\), nRMSE@16 of \(5.76\mathrm{E}{-03}\), and Rel-\(L_2\)@16 of \(2.62\mathrm{E}{-02}\), and it also achieves the lowest Rel-\(H^1\)@16 of \(1.67\mathrm{E}{-01}\), maintaining both field-level and derivative-level accuracy after repeated rollout.
These results suggest that the proposed increment-centered modeling is beneficial for suppressing long-horizon error accumulation under cross-resolution transfer.

\begin{table*}[t]
\centering
\caption{Performance comparison on the FSI task from RealPDEBench under different training paradigms. Results of baseline models are taken from the original RealPDEBench paper~\cite{hu2026realpdebench}.}
\label{tab:realpdebench_fsi}
\begin{tabular}{ll|ccc|ccc|ccc}
\toprule
\multirow{2}{*}{Baseline} & \multirow{2}{*}{Params} 
& \multicolumn{3}{c|}{Simulated Training}
& \multicolumn{3}{c|}{Real-world Training}
& \multicolumn{3}{c}{Real-world Finetuning} \\
\cmidrule(lr){3-5}
\cmidrule(lr){6-8}
\cmidrule(lr){9-11}
& 
& RMSE$\downarrow$ & Rel-\(L_2\)$\downarrow$ & F-RMSE$\downarrow$
& RMSE$\downarrow$ & Rel-\(L_2\)$\downarrow$ & F-RMSE$\downarrow$
& RMSE$\downarrow$ & Rel-\(L_2\)$\downarrow$ & F-RMSE$\downarrow$ \\
\midrule
\midrule
U-Net~\cite{ronneberger2015u} & 23.0M 
& 2.23E-02 & 1.59E-01 & \textbf{2.50E-03}
& \textbf{8.50E-03} & 5.83E-02 & 7.00E-04
& 8.40E-03 & 5.79E-02 & \textbf{7.00E-04} \\
CNO~\cite{raonic2023convolutional} & 8.0M
& 2.41E-02 & 1.72E-01 & 2.90E-03
& 1.05E-02 & 7.41E-02 & 9.00E-04
& 9.60E-03 & 6.79E-02 & 8.00E-04 \\
DeepONet~\cite{lu2021deeponet} & 3.4M
& 6.37E-02 & 4.61E-01 & 9.70E-03
& 3.50E-02 & 2.50E-01 & 5.10E-03
& 3.32E-02 & 2.37E-01 & 4.80E-03 \\
FNO~\cite{DBLP:conf/iclr/LiKALBSA21} & 268.5M
& 4.26E-02 & 3.10E-01 & 5.90E-03
& 1.29E-02 & 8.92E-02 & 1.20E-03
& 1.27E-02 & 8.81E-02 & 1.20E-03 \\
WDNO~\cite{hu2025wavelet} & 91.7M
& 3.69E-02 & 2.70E-01 & 5.00E-03
& 1.17E-02 & 8.17E-02 & 1.10E-03
& 1.16E-02 & 8.24E-02 & 1.10E-03 \\
MWT~\cite{gupta2021multiwavelet} & 2.9M
& 3.39E-02 & 2.49E-01 & 4.60E-03
& 1.28E-02 & 9.10E-02 & 1.00E-03
& 1.28E-02 & 9.12E-02 & 1.00E-03 \\
GK-Transformer~\cite{cao2021choose} & 67.6M
& 3.07E-02 & 2.24E-01 & 3.90E-03
& 1.27E-02 & 9.16E-02 & 1.10E-03
& 1.24E-02 & 8.98E-02 & 1.00E-03 \\
Transolver~\cite{wu2024transolver} & 4.3M
& 3.05E-02 & 2.12E-01 & 4.30E-03
& 2.36E-02 & 1.57E-01 & 2.70E-03
& 2.23E-02 & 1.48E-01 & 2.50E-03 \\
DPOT-S-FT~\cite{hao2024dpot} & 41.3M
& 2.60E-02 & 1.89E-01 & 3.10E-03
& 1.05E-02 & 7.46E-02 & 8.00E-04
& 9.90E-03 & 7.01E-02 & \textbf{7.00E-04} \\
DPOT-L-FT~\cite{hao2024dpot} & 673.5M
& 2.62E-02 & 1.90E-01 & 3.00E-03
& 9.90E-03 & 6.87E-02 & 8.00E-04
& 9.70E-03 & 6.68E-02 & 8.00E-04 \\
ML Average & --
& 3.37E-02 & 2.43E-01 & 4.50E-03
& 1.48E-02 & 1.04E-01 & 1.50E-03
& 1.43E-02 & 9.99E-02 & 1.50E-03 \\
DMD~\cite{kutz2016dynamic} & --
& -- & -- & --
& -- & -- & --
& 4.50E-02 & 2.96E-01 & 6.00E-03 \\
GeoIncNO & 9.8M
& \textbf{1.91E-02} & \textbf{1.46E-01} & 2.60E-03
& \textbf{8.50E-03} & \textbf{5.31E-02} & \textbf{6.40E-04}
& \textbf{8.18E-03} & \textbf{5.05E-02} & \textbf{7.00E-04} \\
\bottomrule
\end{tabular}
\end{table*}

\begin{table*}[t]
\centering
\caption{Autoregressive evaluation on the FSI task from RealPDEBench. Results of baseline models are taken from the original RealPDEBench paper~\cite{hu2026realpdebench}.}
\label{tab:realpdebench_fsi_ar}
\renewcommand{\arraystretch}{1.05}
\begin{tabular}{llcccccc}
\toprule
Model & Params & RMSE$\downarrow$ & MAE$\downarrow$ & Rel-\(L_2\)$\downarrow$ & R$^2$$\uparrow$ & F-RMSE$\downarrow$ & FE$\downarrow$ \\
\midrule
\midrule
U-Net~\cite{ronneberger2015u} & 23.0M 
& 1.76E-02 & \textbf{8.60E-03} & 1.19E-01 & 9.47E-01 & 1.23E-03 & 2.00E+01 \\
CNO~\cite{raonic2023convolutional} & 8.0M
& 2.27E-02 & 1.19E-02 & 1.55E-01 & 9.15E-01 & 1.51E-03 & 5.19E+01 \\
DeepONet~\cite{lu2021deeponet} & 3.4M
& 3.63E-02 & 2.21E-02 & 2.59E-01 & 7.74E-01 & 2.61E-03 & 7.68E+01 \\
FNO~\cite{DBLP:conf/iclr/LiKALBSA21} & 268.5M
& 2.07E-02 & 1.11E-02 & 1.41E-01 & 9.26E-01 & 1.49E-03 & 3.54E+01 \\
WDNO~\cite{hu2025wavelet} & 91.7M
& 2.15E-02 & 1.16E-02 & 1.49E-01 & 9.21E-01 & 1.53E-03 & 4.41E+01 \\
MWT~\cite{gupta2021multiwavelet} & 2.9M
& 2.97E-02 & 1.44E-02 & 2.12E-01 & 8.56E-01 & 1.78E-03 & 4.01E+01 \\
GK-Transformer~\cite{cao2021choose} & 67.6M
& 2.06E-02 & 1.12E-02 & 1.42E-01 & 9.27E-01 & 1.45E-03 & 3.21E+01 \\
Transolver~\cite{wu2024transolver} & 4.3M
& 2.35E-02 & 1.28E-02 & 1.57E-01 & 9.02E-01 & 2.67E-03 & \textbf{1.31E+01} \\
DPOT-S-FT~\cite{hao2024dpot} & 41.3M
& 1.86E-02 & 9.54E-03 & 1.27E-01 & 9.40E-01 & 1.30E-03 & 2.10E+01 \\
DPOT-L-FT~\cite{hao2024dpot} & 673.5M
& 1.79E-02 & 8.91E-03 & 1.20E-01 & 9.45E-01 & 1.25E-03 & 1.74E+01 \\
DMD~\cite{kutz2016dynamic} & --
& 6.25E-02 & 3.29E-02 & 4.13E-01 & 3.12E-01 & 4.58E-03 & 8.22E+01 \\
GeoIncNO & 9.8M
& \textbf{1.64E-02} & 8.82E-03 & \textbf{1.10E-01} & \textbf{9.79E-01} & \textbf{1.16E-03} & 1.99E+01 \\
\bottomrule
\end{tabular}
\end{table*}

\subsection{RealPDEBench FSI Evaluation}
\label{app:realpdebench}

We conduct experiments on the fluid-structure interaction (FSI) task from RealPDEBench~\cite{hu2026realpdebench}, which involves coupled fluid and structural dynamics.
Following the RealPDEBench protocol, we compare models under three training paradigms: simulated training, real-world training, and real-world finetuning.
We report RMSE, MAE, Rel-\(L_2\), R$^2$, and F-RMSE, where lower values indicate better performance for error-based metrics and higher values indicate better performance for R$^2$.
The detailed definitions of all evaluation metrics are provided in Appendix~\ref{subsec:evaluation-metrics}.
DMD is included as the original non-training baseline and is reported only under the corresponding inference setting.

As shown in Table~\ref{tab:realpdebench_fsi}, GeoIncNO achieves strong performance with only 9.8M parameters.
Under simulated training, GeoIncNO obtains the lowest RMSE and Rel-\(L_2\).
Under real-world training, GeoIncNO achieves the lowest Rel-\(L_2\) and F-RMSE while matching the best RMSE.
Under real-world finetuning, GeoIncNO obtains the lowest RMSE and Rel-\(L_2\) and matches the best F-RMSE.
These results show that GeoIncNO provides consistent gains across different RealPDEBench training settings.

Table~\ref{tab:realpdebench_fsi_ar} further reports autoregressive evaluation results.
GeoIncNO achieves the best RMSE, Rel-\(L_2\), R$^2$, and F-RMSE among all compared models.
Although U-Net obtains a slightly lower MAE and Transolver achieves the lowest FE, GeoIncNO provides the strongest overall rollout performance across the main prediction-accuracy metrics.
Compared with DPOT-L-FT, GeoIncNO reduces RMSE from \(1.79\mathrm{E}{-02}\) to \(1.64\mathrm{E}{-02}\) and Rel-\(L_2\) from \(1.20\mathrm{E}{-01}\) to \(1.10\mathrm{E}{-01}\) while using far fewer parameters, 9.8M versus 673.5M.
These results indicate that GeoIncNO maintains stable multi-step prediction behavior on the real-world FSI task.

\subsection{Qualitative Visualization}
\label{app:qualitative}

We further provide qualitative visualizations on competitive benchmarks, including 1D Burgers, 2D NS, 2D SW, and 3D Maxwell.
These examples cover different types of PDE dynamics, such as nonlinear transport, vortical flow, free-surface wave propagation, and electromagnetic wave propagation.
The visual comparisons are used to examine structural preservation, phase consistency, and error accumulation during prediction.

\paragraph{1D Burgers}
Figure~\ref{fig:vis_overall} compares the predicted solution fields and corresponding error heatmaps on the 1D Burgers benchmark.
GeoIncNO closely follows the smooth spatiotemporal structure of the ground-truth solution, while several baselines exhibit visible distortions along the temporal direction.
The error maps further show that GeoIncNO produces weaker and more uniformly distributed errors, whereas the baselines contain larger localized error regions.
These results indicate that GeoIncNO better captures nonlinear transport dynamics in the Burgers equation.

\paragraph{2D NS and SW}
Figure~\ref{fig:ns_swe_visualization} presents qualitative rollout results on the 2D NS and SW benchmarks at \(t=0,5,15,20\).
For NS, GeoIncNO better preserves vortical structures and avoids the structural distortion or misplaced high-vorticity regions observed in several baselines.
For SW, GeoIncNO maintains more accurate fluid-height evolution and reduces the phase mismatch, amplitude distortion, and enlarged spatial errors produced by other methods.
These results suggest that GeoIncNO provides more stable rollout predictions for two-dimensional dynamical systems.

\paragraph{3D Maxwell}
Figure~\ref{fig:maxwell_comparison} visualizes the rollout predictions of the magnetic-field component \(B_x\) and the electric-field component \(E_x\) on the 3D Maxwell benchmark.
For each component, XY, XZ, and YZ slices are shown at \(t=0,5,10,15,20\), together with the ground truth, prediction, and absolute error maps.
GeoIncNO preserves the main wave structures across different slice planes and maintains small errors throughout the rollout.
By contrast, several baselines exhibit over-smoothed fields, distorted wave patterns, or enlarged errors at later time steps.
These results show that GeoIncNO better preserves 3D electromagnetic dynamics and phase consistency under long-horizon prediction.

\clearpage
\begin{figure*}[t]
    \centering
    \includegraphics[width=\textwidth]{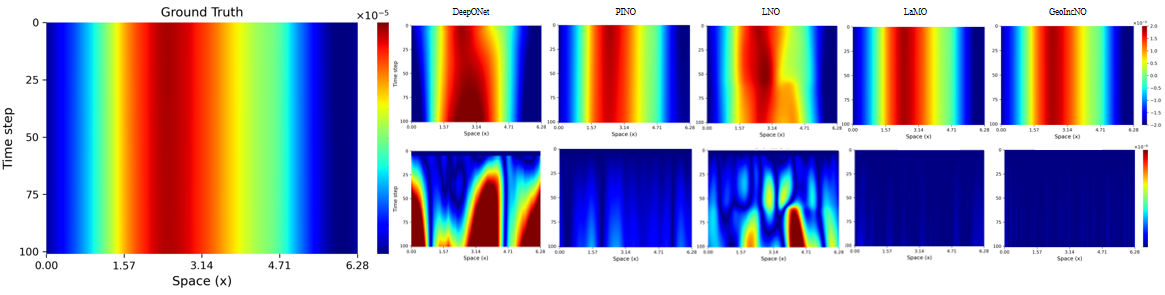}
    \caption{Qualitative comparison on the 1D Burgers benchmark, including the ground-truth solution, predicted solution fields, and absolute error maps.}
    \label{fig:vis_overall}
\end{figure*}

\begin{figure*}[b]
    \centering
    \subfigure[2D NS]{
        \includegraphics[width=0.98\linewidth]{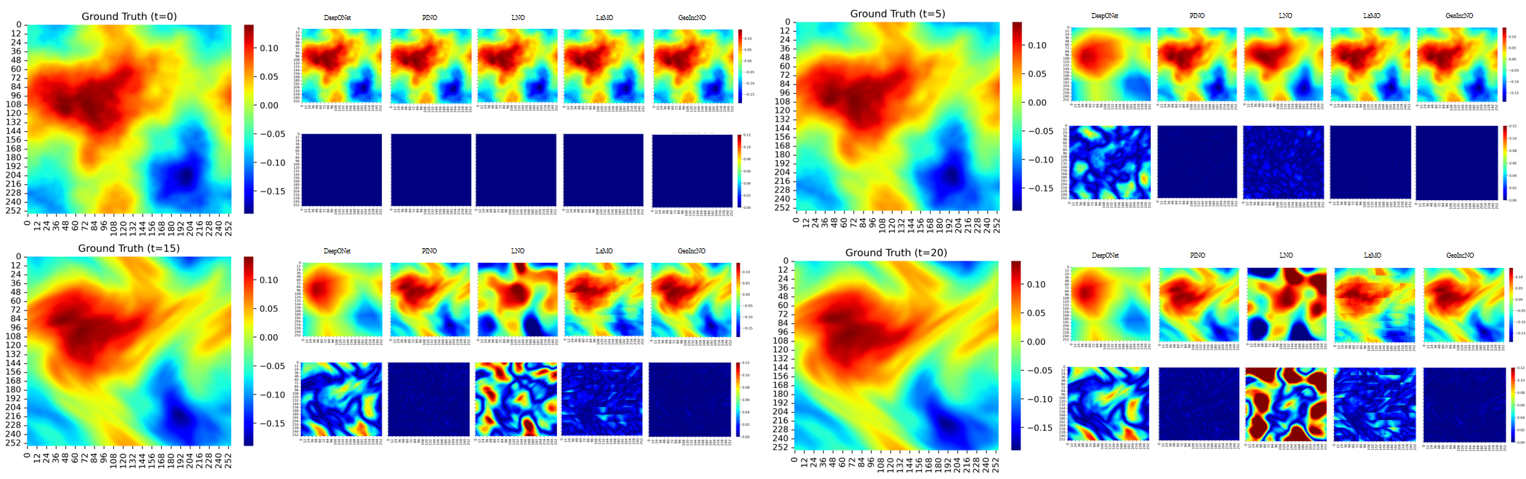}
        \label{fig:ns_visualization}
    }

    \vspace{0.3em}
    \subfigure[2D SW]{
        \includegraphics[width=0.98\linewidth]{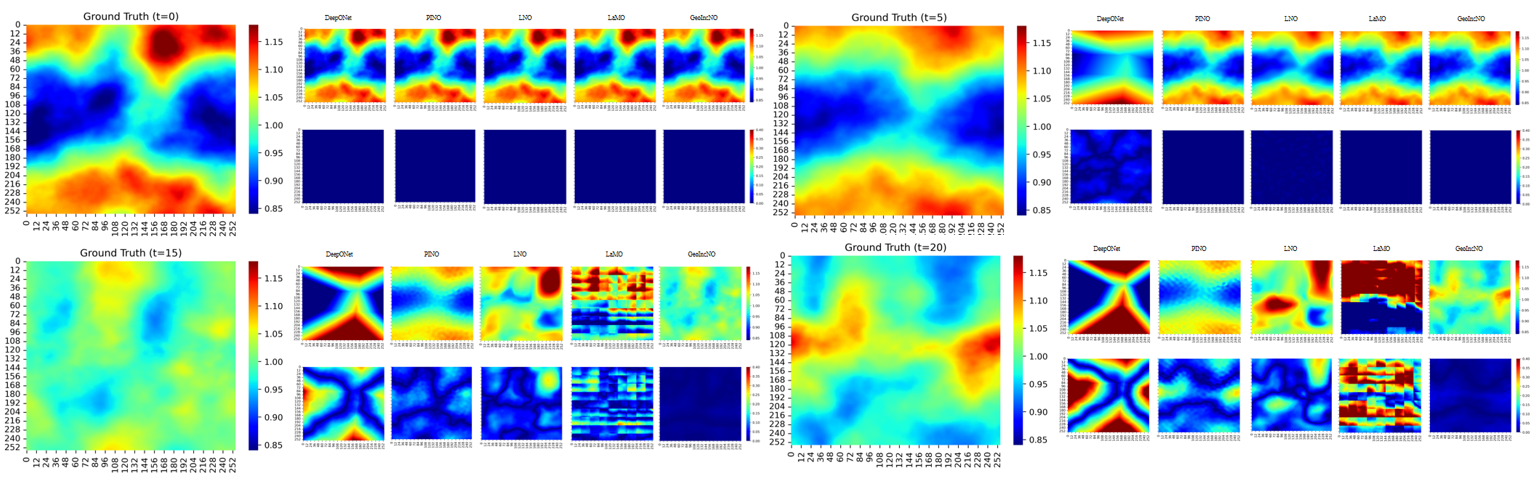}
        \label{fig:swe_visualization}
    }
    \caption{Qualitative rollout visualization on 2D dynamical benchmarks at \(t=0,5,15,20\). 
    The top panel shows vorticity-field predictions on the 2D NS benchmark, and the bottom panel shows fluid-height predictions on the 2D SW benchmark.}
    \label{fig:ns_swe_visualization}
\end{figure*}

\begin{figure*}[t]
    \centering
    \subfigure[$B_x$ component]{
        \includegraphics[width=0.46\linewidth]{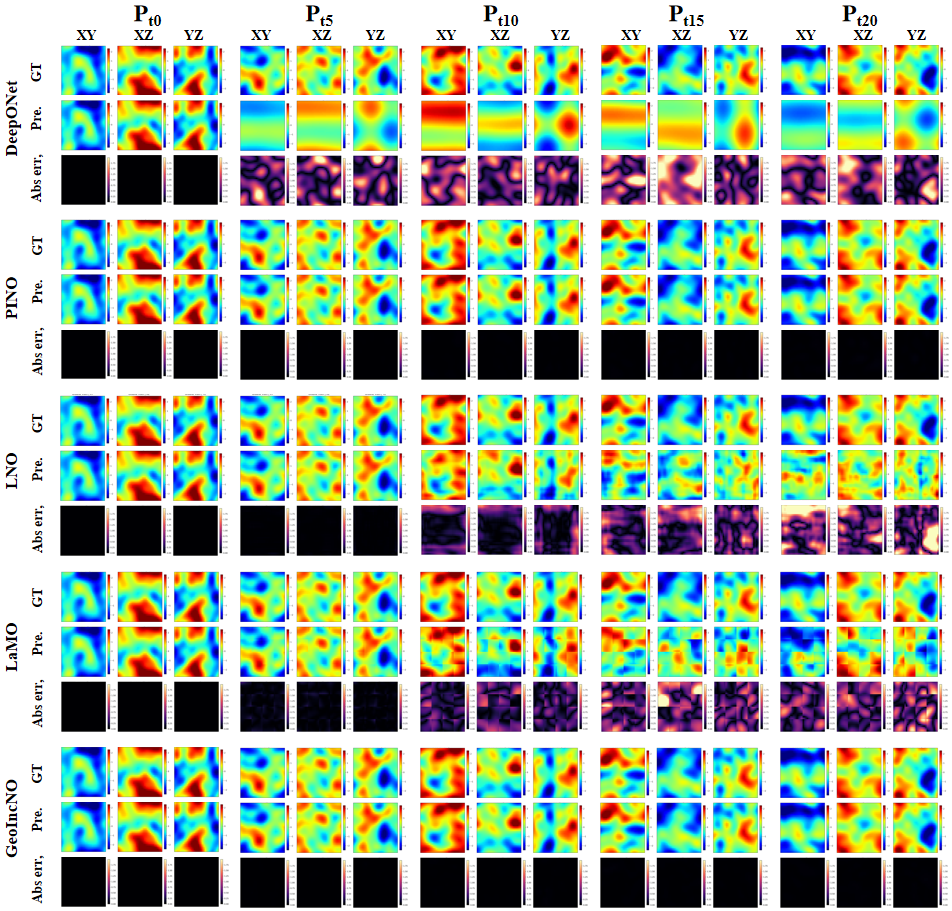}
    }
    \hfill
    \subfigure[$E_x$ component]{
        \includegraphics[width=0.47\linewidth]{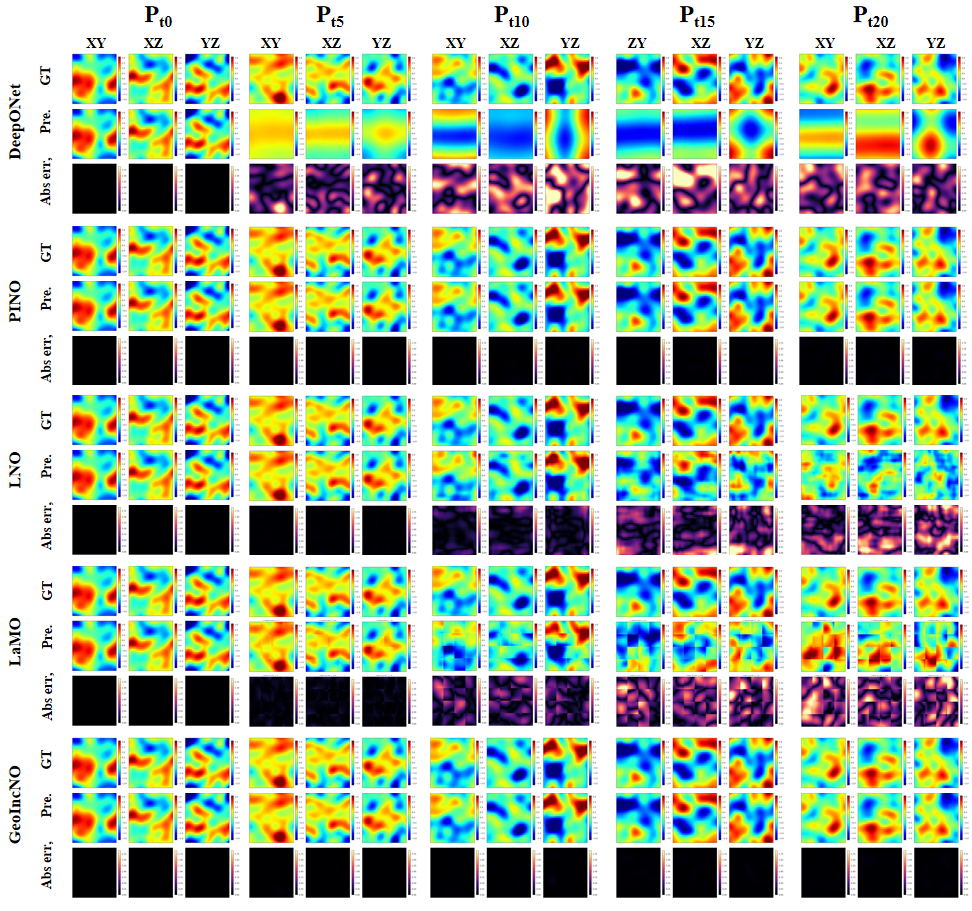}
    }
    \caption{Qualitative comparison on the 3D Maxwell benchmark. 
    The left panel shows slice-wise predictions and absolute error maps for the magnetic field component \(B_x\), and the right panel shows those for the electric field component \(E_x\).}
    \label{fig:maxwell_comparison}
\end{figure*}

\end{document}